\newcommand{\commentAGI}[1]{#1}  
\newcommand{\commentARXIV}[1]{}

\commentAGI{
\documentclass{article}
}
\commentARXIV{
\documentclass{llncs}  
}

\newcommand{\comment}[1]{} 
\newcommand{\commentvisible}[1]{} 

\commentAGI{
\textwidth = 6.5 in 
\textheight = 9 in 
\oddsidemargin = 0.0 in
\evensidemargin = 0.0 in
\topmargin = 0.0 in
\headheight = 0.0 in
\headsep = 0.0 in
}

\input{glyphtounicode}  
\pdfgentounicode=1
\usepackage[colorlinks,\commentAGI{pagebackref,}linktocpage]{hyperref}

\usepackage{algorithmicx} 
\usepackage{algpseudocode} 
\usepackage{algorithm}
\usepackage{caption}

\usepackage{amssymb}  
\usepackage{amsmath}   

\commentAGI{
\usepackage{amsthm} 
\newtheorem{theorem}{Theorem}

\newtheorem{definition}{Definition}
}


\usepackage{amsopn}

\DeclareMathOperator*{\argmin}{arg\,min}

\usepackage{color}
\definecolor{orange}{RGB}{255,127,0}
\definecolor{brown}{RGB}{150,70,0}
\definecolor{green}{RGB}{127,255,127}
\definecolor{darkgreen}{RGB}{0,127,0}
\definecolor{blue}{RGB}{127,127,255}
\definecolor{lightblue}{RGB}{150,150,255}
\definecolor{darkblue}{RGB}{0,0,127}
\definecolor{red}{RGB}{255,90,90}
\definecolor{grey}{RGB}{127,127,127}
\definecolor{pink}{RGB}{255,180,180}
\newcommand{\blue}[1]{\textcolor{blue}{{#1}}}

\newcommand{\red}[1]{\textcolor{red}{{#1}}}

\usepackage{xspace}   
\newcommand{\xaxis}{$x$-axis\xspace}
\newcommand{\yaxis}{$y$-axis\xspace}

\newcommand{\Steps}{\ensuremath{S}} 
\newcommand{\ESteps}{\ensuremath{\mathbb{S}}} 
\newcommand{\EMemory}{\ensuremath{\mathbb{M}}} 
\newcommand{\Length}{\ensuremath{L}} 
\newcommand{\ELength}{\ensuremath{\mathbb{L}}} 
\newcommand{\ELS}{\ensuremath{\mathbb{LS}}} 
\newcommand{\Response}{\ensuremath{R}} 
\newcommand{\EResponse}{\ensuremath{\mathbb{R}}} 
\newcommand{\Verification}{\ensuremath{W}}  
\newcommand{\EVerification}{\ensuremath{\mathbb{W}}}  
\newcommand{\Effort}{\ensuremath{F}}   
\newcommand{\EEffort}{\ensuremath{\mathbb{F}}}   
\newcommand{\Run}{\ensuremath{Run}}   
\newcommand{\EBids}{\ensuremath{\mathbb{B}}}   
\newcommand{\Hardness}{\ensuremath{\hbar}}   

\newcommand{\Acc}{\ensuremath{\mathbb{A}}\xspace}
\newcommand{\AccSet}{\ensuremath{{\cal{A}}}\xspace}
\newcommand{\Var}[1]{\ensuremath{\mbox{\sffamily{Var}}[#1]\xspace}}

\newcommand{\Psy}{\Psi}
\newcommand{\Psydiff}[1]{\ensuremath{\Psy_{#1}\xspace}}

\newcommand{\TAUC}{} 
\newcommand{\nuT}{} 
\newcommand{\nuE}{\nu} 

\newcommand{\natlangprog}[1]{`{\ttfamily #1}'}

\usepackage{graphicx}

\commentAGI{
\title{
Universal Psychometrics Tasks: difficulty, composition and decomposition}   
}
\commentARXIV{
\title{
Stochastic Tasks: Difficulty and Levin Search}   
}
\author
{
	Jos\'{e} Hern\'{a}ndez-Orallo\commentAGI{\\
	{\normalsize\em DSIC, Universitat Polit\`ecnica de Val\`encia, Spain}\\
	{\normalsize \tt jorallo@dsic.upv.es}
	}
}
\commentARXIV{
\institute{DSIC, Universitat Polit\`ecnica de Val\`encia, Spain\\  
\email{jorallo@dsic.upv.es}}
}

\commentAGI{
\date{\today}
}

\bibliographystyle{plain}

\begin{document}

\maketitle

{
\commentAGI{
\abstract This note revisits the concepts of task and difficulty.
The notion of cognitive task and its use for the evaluation of intelligent systems is still replete with issues. 
The view of tasks as MDP in the context of reinforcement learning has been especially useful for the formalisation of 
learning tasks. However, this alternate interaction does not accommodate well for some other tasks that are usual in 
artificial intelligence and, most especially, in animal and human evaluation. In particular, we want to have a more 
general account of episodes, rewards and responses, and, most especially, the computational complexity of the 
algorithm behind an agent solving a task. This is crucial for the determination of the difficulty of a task as the 
(logarithm of the) number of computational steps required to acquire an acceptable policy for the task, which 
includes the exploration of policies and their verification. We introduce a notion of asynchronous-time stochastic 
tasks. Based on this interpretation, we can see what task difficulty is, what instance difficulty is (relative to a 
task) and also what task compositions and decompositions are.}
\commentARXIV{\abstract We establish a setting for asynchronous stochastic tasks that account for 
episodes, rewards and responses, and, most especially, the computational complexity of the 
algorithm behind an agent solving a task. This is used to determine the difficulty of a task as the 
(logarithm of the) number of computational steps required to acquire an acceptable policy for the task, which 
includes the exploration of policies and their verification. 
We also analyse instance difficulty, task compositions and decompositions.
}
\\
{\bf Keywords}: Task difficulty, task breadth, Levin's search, universal psychometrics.
}

\section{Introduction}

\commentAGI{There is an increased interest in artificial intelligence evaluation, motivated by recent breakthroughs produced by new technologies, and also because of an urging pressing of characterising the abilities of machines, so that we can have a better account of their implications in the job market and the potential risks.} In the context of universal psychometrics \cite{upsychometrics2}, defined as the evaluation of cognitive features of humans, non-human animals, computers, hybrids and collectives thereof, the notion of `cognitive task' was introduced and formalised, but several issues still require further development, such as the associated concepts of task difficulty and task breadth (or alternative concepts such as composition and decomposition). 

\commentAGI{In this paper, we realise that many tasks in artificial intelligence, human psychometrics and animal cognition do not fit well within the formalism of (PO)MDP, especially with the concept of `transition function'. With the help of some examples of cognitive tasks, we identify several features that a proper notion of cognitive task should incorporate. It is important that we realise that the evaluation setting does not need to be defined in terms of the way particular approaches solve the problem (which can still be approached through a reinforcement learning approach using a MDP formalism). What we see is that the alternate finite-state view of MDP based on transition functions makes it difficult to understand how some simple tasks, such as response time, can be accounted for, and most especially, when we want to analyse the computational complexity of the space of policies, in order to derive notions such as task difficulty.}

In the case of using formalisms that rely on transition functions such as (PO)MDP (for discrete or continuous cases), the notion of computational cost must be derived from the algorithm behind the transition function, which may have a very high variability of computational steps depending on the moment: at idle moments it may do just very few operations, whereas at other iterations it may require an exponential number of operations (or even not halt). The maximum, minimum or average for all time instants show problems (such as dependency on the time resolution for which the steps of the algorithm should remain fairly constant, or the use of space with finite states). Also, the use of transition functions differs significantly in the way animals (including humans) and many agent languages in AI work, with algorithms that can use signals and have a control of time through threads (using, e.g., ``sleep" instructions where computation stops momentarily). 
Of course, we are not saying that it is impossible to find modifications of MDP to accommodate all this, but we are going to see 
a different formalism, based on probabilistic (Turing) machines with a special ``sleep'' instruction.

The other important thing is the notion of response, score or return $R$ for an episode. Apart from relaxing its functional dependency with the rewards during an episode, to account with a goal-oriented task, we consider the problem of commensurability of different tasks by using a level of tolerance, and deriving the notion of acceptable policy from it. While this seems a cosmetic change, it paves the way to the notion of difficulty ---as difficulty does not make sense if we do not set a threshold or tolerance--- and also to the analysis of task instances.

After these instrumental accommodations, we are ready to derive the computational steps taken by an algorithm during a task. This is crucial for the notion of difficulty. With this representation, the straightforward idea of difficulty as search effort is used, whatever the kind of search is (``intellectual'', ``evolutionary'' or ``cultural'', as Turing distinguished \cite{turing1948intelligent}).   
Difficulty is just the logarithm of the computational steps that are required to find the optimal policy, including trying several possible policies and verifying them. This is in accordance with Levin's universal search  \cite{Levin73,Li-Vitanyi08}, the notion of information gain \cite{HernandezOrallo00d} and the interpretation of the ``minimal process for creating 
[something] from nothing" \cite{mayfield2007minimal}.  
\commentAGI{However, we have to be very careful that when an agent interacts with the world or a task, this task can give hints and reinforce the search process. How all this is set makes a big difference, especially for the interpretation of verification (for instance, in Levin's search, verification is simply the execution of the algorithm to check the output). It is insightful to see that in some tasks, the agent can just find policies such as \natlangprog{do what I have seen}, \natlangprog{do a Monte Carlo approach} and \natlangprog{learn from the examples} instead of the `ideal' specific policy for the problem. These policies (or meta-policies) may require fewer computational steps during the search and may lead to acceptable policies, even if the code for the search has to be counted in the description of the policy.
}

\commentAGI{
The notion of difficulty for tasks is usually applied to this generation of a policy for the task, either by evaluation or through learning. This is very different to the computational complexity of the problem. For instance, one thing is to learn a function that sorts a string and another thing is to analyse whether a certain algorithm (or any algorithm whatsoever) can sort a string in a number of steps that is polynomially related to the size of the string. Of course, we can ask about the computational complexity of {\em learning} a sort function from examples, but in this cases we need to consider several factors such as (1) the desired sort function in terms of accepted level of error, (2) what the minimum efficiency requirement for the policy is, (3) how many examples are needed and (4) how much time is needed. Some of these questions have been solved by learning theory, and settings such as PAC learning.
}

In addition, the notion of task instance difficulty is more controversial, as it usually assumes that it is relative to the task (e.g., `30+0' is an easy instance of the addition task) or even to the policy (e.g., `sort {\sffamily gabcdef}'' is a very easy case for a particular sorting algorithm). Note that average-case complexity in complexity theory refers to how many computational steps are employed to solve a set of instances (with a distribution) given a particular algorithm ---or for every possible conceivable algorithm. But one question that is not usually made is: How can we say that `sort {\sffamily gabcdef}' is easier than `sort {\sffamily gdaefcb}' without setting an algorithm or the definition of a distribution of algorithms? The key is to analyse the distribution of policies and the resources they require. Of course, this must be done relative to the task with a large (or infinite) number of instances. 
\commentAGI{We will see that otherwise (if we just focus on one instance or a small set of instances), this does not make sense, as we can just rely on memorising the policy with a lookup table.}

The paper is organised as follows. 
Section \ref{sec:tasks} starts with an example and tries to identify the features and requirements that a universal psychometric task should have to be a good evaluation task. Then it introduces a formalism, as general as possible, for this.
Section \ref{sec:difficulty} investigates the notion of task difficulty, and the necessary notions of effort (based on length and computational steps) and acceptability (using a tolerance level). 
Section \ref{sec:instance} discusses whether the notion of task difficulty can be inherited for instances. Then we move to 
the notions of task composition and decomposition and their implications, and whether this allows for the definition of response curves that may be used for adaptive tests. 
%
%
Section \ref{sec:verification} introduces a variant of Levin search that takes the stochasticity of tasks into account and includes a new term into $Kt$, which is based on the number of repetitions that are needed to verify that a policy is $\epsilon$-acceptable with some given confidence $1-\delta$, \`a la PAC (Probabilistic Approximate Correct). 
%
%
Section \ref{sec:conclusions} closes the paper with some comments about the related work and a few open questions and directions.


\section{Tasks, trials and responses}\label{sec:tasks}

Cognitive evaluation is performed through instruments, known as cognitive tests, which are composed of cognitive tasks. Consequently, we need to have a clear view of what a task is and how they can be compared. In \cite{upsychometrics2}, tasks are defined as interactive processes with asynchronous time where the final response is not {\em necessarily} a function of rewards. However, tasks are still based on transition functions and ---partly because of this--- there is no clear handling of idle times to define a proper notion of computational steps. In addition, it is unclear what happens if there is repeated testing on the same agent, and also if the agent has been gone through a previous training stage or not. 
Despite some extra notational burden, in this paper we will try to be explicit about all this.

\commentAGI{
\subsection{Example}

What do Talon the dolphin in Florida Keys \cite{jaakkola2005understanding} and Ana the sea lion in Valencia \cite{abramson2011relative} have in common? Both have been tested about their ability to judge relative quantity, a task that is usually referred to as ``relative numerousness", ``relative numerosity" or ``relative quantity judgment''. Talon the bottlenose dolphin, for instance, was repeatedly tested with two different quantities such as the two shown in Fig.~\ref{fig:numerousness}, and was given a reward if selected the lesser amount. 

\begin{figure}[ht]
\vspace{-0.3cm}
\centering
\hspace{1cm}
\includegraphics[width=0.35\textwidth]{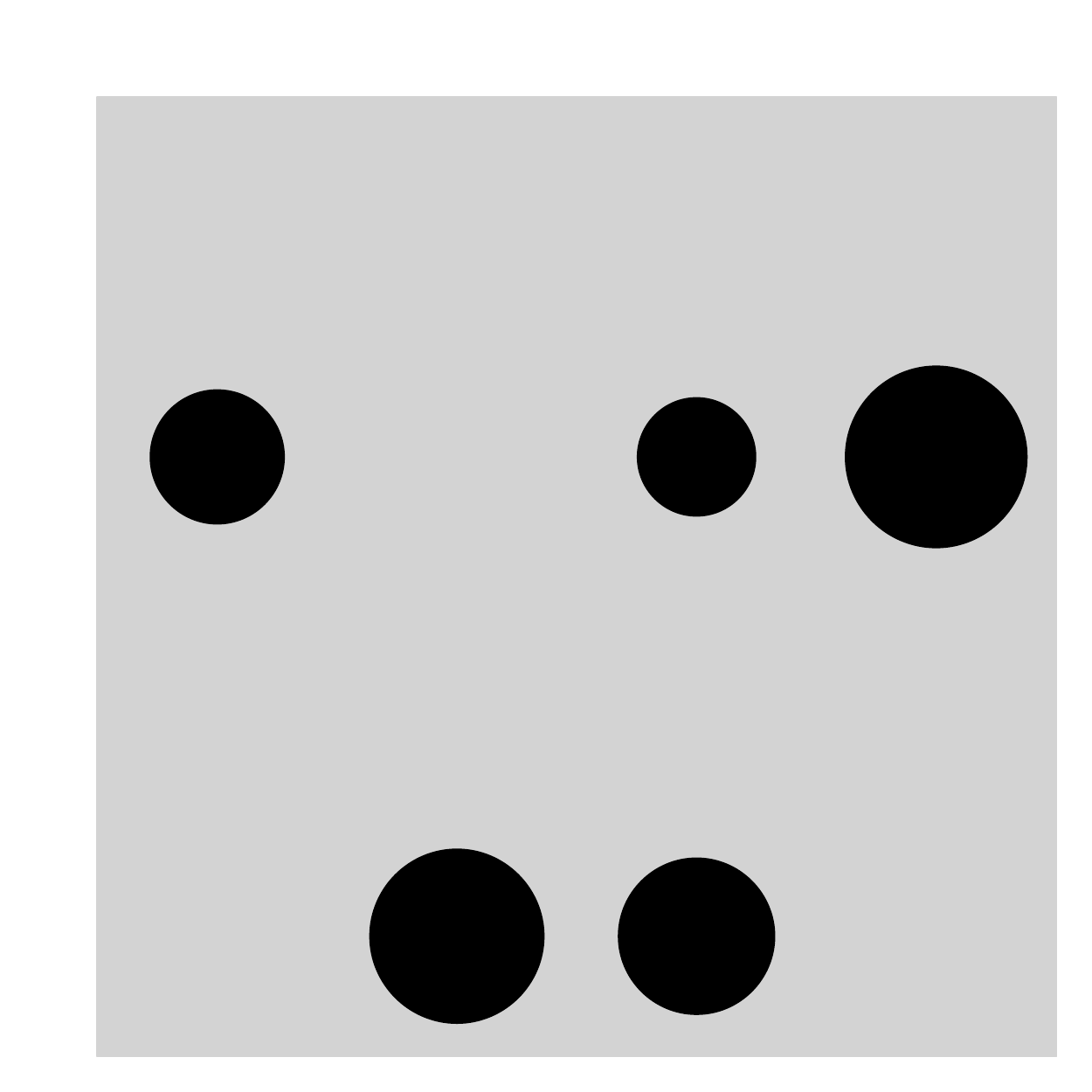} \hfill 
\includegraphics[width=0.35\textwidth]{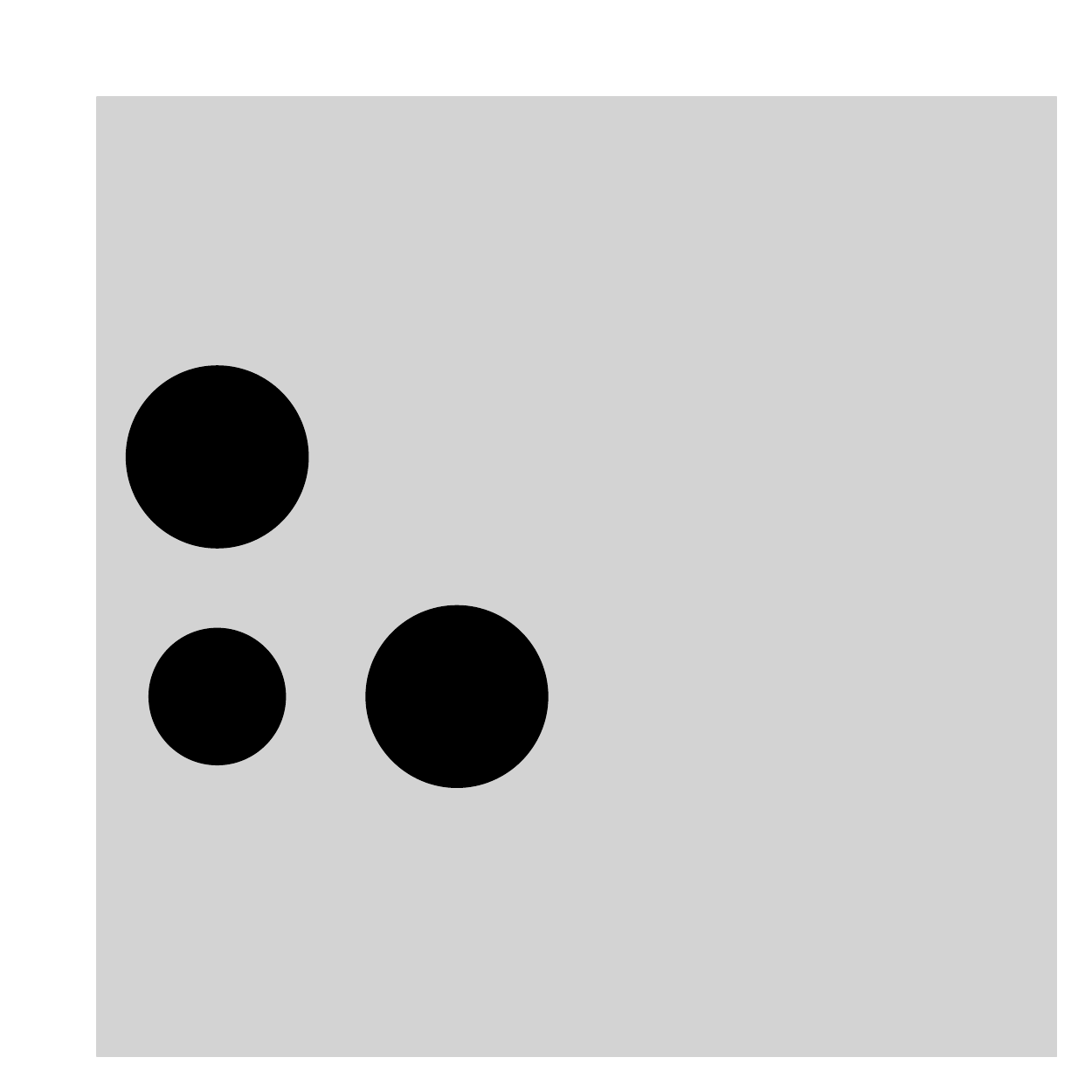} \hspace{1cm}
\vspace{-0.3cm}
\caption{An example of a `relative numerousness' task, where two boards are shown with a different number of dots. The size of the dots should not matter for the quantity. Left: a panel with 5 dots. Right: a panel with 3 dots.}
\label{fig:numerousness}
\end{figure}

Apart from cetaceans, many other studies about ``relative numerousness" 
have been conducted in the area of comparative psychology, including angelfish, bears, capuchin monkeys, squirrel monkeys, cats, chimpanzees, coyotes, gorillas, hyenas, orangutans, pigeons, salamanders, sea lions and elephants (see, e.g., \cite{abramson2011relative,perdue2012putting,vonk2012bears}, to links to some of these studies).

The interesting thing about this example is, on one hand, that it has been applied to many different kinds of animals, including humans of different ages (needless to say that the task is easy for adult humans that are allowed to count). On the other hand, it is relatively easy to write a computer program that solves this task perfectly, using image recognition and simple counting. This example will serve as a running example to illustrate some issues of tasks: level of completion, stochastic character, sequentiality, training stage, etc. Also, we will use it as a good example of whether and how difficulty can be determined formally, independently of the population results.

Other tasks (originating from psychometrics, comparative psychology or artificial intelligence) will be used in what follows and will be described in more detail if needed. For instance, we will use letter series or Raven's progressive matrices (as in IQ tests), response time, mazes, playing Pacman, English-Spanish translation, simple imitation (action equal to most recent observation), eidetic memory 
, sudokus and addition. These tasks are summarised in Table \ref{tab:tasks}.

\begin{table}
{\small 
\begin{center}
\begin{tabular}{p{1.0cm}p{3cm}p{0.7cm}p{5cm}p{5.5cm}}
 Id         & Name                          & Stoc & Description                                                             & Instances and Generation                                      \\ \hline
 $\mu_{num}$      & Relative numerousness         & yes  & Choose between left/right panels the one with fewest number of dots     & Number and size of dots uniformly chosen from a range. \\ 
 $\mu_{RPM}$      & Raven's progressive matrices  & yes  & Choose the option that better matches the matrix                        & A finite set of problems, uniformly chosen.     \\ 
 $\mu_{Ctest}$    & $C$-test                      & yes  & Find the continuation of a letter series                                & The difficulty of the sequence is uniformly chosen.     \\ 
 $\mu_{response}$ & Response time                 & yes  & Press left/right button when and as the signal indicates                                   & A uniform distribution of delays from a range. Left/right uniform too.  \\ 
 $\mu_{maze}$     & Maze                          & yes  & Go from start to exit in a maze                                           & A random generator of solvable mazes with variable proportion of walls.           \\ 
 $\mu_{pacman}$   & Pacman                        & yes  & Eat all dots without being eaten by some ghosts                         & Ghosts move with some patterns but stochastically.     \\ 
 $\mu_{trans}$    & Translation                   & yes  & Translate a text from English to Spanish                                       & Texts taken from a large finite corpus.          \\ 
 $\mu_{imit}$     & Simple imitation              & yes  & Repeatedly perform the action equal to most recent observation          & Observation chosen uniformly from a finite set  \\ 
 $\mu_{guess}$    & Guess action sequence         & yes  & Actions are guessed until match (with reward), then another action  & Sequence chosen uniformly from a finite set     \\ 
 $\mu_{eidetic}$  & Eidetic memory                & yes  & Remember a sequence of numbers that have only been shortly shown        & Various exposition times and sequences          \\ 
 $\mu_{srote}$    & Short constant string         & no   & The agent must output the string. Correct string is shown afterwards    & Always the same text for all instances          \\
 $\mu_{lrote}$    & Long constant text            & no   & The agent must output the string. Correct string is shown afterwards    & Always the same text for all instances          \\
 $\mu_{sudoku}$   & Sudokus                       & yes  & A 9$\times$9 sudoku                                                      & Consistent puzzles from a random generator                 \\  
 $\mu_{add}$      & Addition                      & yes  & Addition of two natural numbers                                         & Numbers chosen uniformly from a range.          \\  \hline
\end{tabular}
\caption{Some illustrative tasks that can be used to reason about some of the concepts discussed in this paper. The column `Stoc' indicates whether they are stochastic or not.}\label{tab:tasks}
\end{center}
}
\end{table}

}

\commentAGI{

\subsection{Features of a task}\label{sec:features}

Having a look at the `relative numerousness' and other tasks, we need to consider several features (some of them present in comparative psychology, psychometrics, reinforcement learning, etc.):

\begin{itemize}
\item Tasks can be administered in one or more trials. There is a result or response $\Response$ at the end of a task trial.
\item As trials can be repeated, if the system is not reinitialised after each trial, we have a cumulative evaluation of the task. Its evolution is measured in terms of the number of trials or attempts $\nu$. 
\item Asynchronous time: many tasks in psychometrics require time to be continuous or to be actual time. For instance, the response time task or a real-time Pacman requires the use of time. Note that there is a long tradition of discrete time in AI, especially in reinforcement learning, although the use of continuous time environments has also been studied in the areas of intelligent control and also in various kinds of reinforcement learning \cite{shelton2014tutorial}. 
We are just in favour of using asynchronous discrete time. The crucial point is that actions and observations from the environment are not alternating.
\item Trials have a limited time $\tau$. Performance depends on this limited time.
\item Interaction is given by discrete structures, but not bounded, i.e., we will consider algorithmic actions and algorithmic observations. In other words, actions and observations are complex structures that cannot be represented with a finite set of actions and observations. For instance, in the `relative numerousness' or `Pacman' tasks we can assume a finite grid of points up to some given resolution, but for an English-Chinese translation task inputs and outputs are, in principle, not bounded.
\item States are algorithmic. There is no finite set of states. The Markov property is not assumed either. Tasks are usually non-ergodic (it is the repetition of several task instances what makes learning possible).
\item Tasks (and subjects) are stochastic (if they are not stochastic ---or not very stochastic---, rote learning will be frequent). Several trials for a task can give different results. 
\item When several instances of the {\em same} task are performed they can be averaged and their expected value estimated. However, it is important to note that for {\em different} tasks, the aggregation of the response of different tasks (e.g., an average) might not make sense (if the values are not commensurate). When using different tasks, if they are to be aggregated nonetheless, the final score of a test can depend on tolerance levels $\epsilon$ over the responses. Only if these are seen in terms of similar difficulty, the numerical aggregation (and the notion of task composition) can become  meaningful. 
\item Rewards are a kind of transmitting supervision during a trial. They may exist or not, and may be linked or not to the response $\Response$. Difficulty will of course be affected by the (non-)existence of rewards. In any case, it is important to clarify that observations can be an indirect sign of supervision too, as we are talking about interactive tasks.
\item In order to evaluate an agent, we do not need anything about the size of the algorithm behind the agent 
or the computational steps it requires, 
just whether it makes some proper actions in due time. The size of their algorithms and their computational steps are important for the calculation of the difficulty, as we will see in the following section. 
\end{itemize}

\noindent 
The relation between repeated trials and rewards deserves some further discussion. If a task has only one trial (or the agent is reinitialised after each trial) and does not have intermediate rewards as in reinforcement learning, then the system must be necessarily predefined and specialised for that task. This is what most AI applications are conceived for. In animals, some tasks trigger an innate behaviour and can be measured in these circumstances. For instance, many animals can choose the board with the {\em highest} number of peanuts or fish without any training at all (and no intermediate rewards indicating whether it is doing right or not). Of course, the innate behaviour takes place because the task (or a similar one) has appeared many times in the evolutionary history of the species. 
However, many other tasks require some training, and this can be done in animals and in AI systems. In animals, rewards can be given at the end of an episode or during the episode. Similarly, in AI, rewards (or payoffs) are given at the end of an episode (e.g., in game theory) or during the episode (e.g., in reinforcement learning, with the reward function).  Even if these two approaches exist for training, when we focus on measuring capabilities and skills, it is usual that intermediate rewards are no longer used, as their effect is more difficult to control and understand. In fact, this is not actually a distinction between animal cognition evaluation and AI evaluation, but a distinction given by the purpose. For instance, in (video)games, it is usual that there is an intermediate reward in the form of points for a score, apart from the goals of each stage or the whole game.

From the above, it seems that for many tasks where the agent is not innate or preprogrammed for, in order to measure abilities and not specific task performance, we need to consider tasks that are both stochastic and with several trials. Several trials allow the system to be trained for the task, where the use of stochastic tasks ensures that the system does something different from rote learning (note that a large set of items chosen randomly is a stochastic task). In the case of several trials, it is important to consider that for animals (including humans) and some AI systems, reinitialisation is not possible, so we have to take into account that the realisation and result of previous trials have effect on subsequent trials. 
Only some tasks can avoid this effect. In fact, some tasks used in IQ tests are usually designed in such a way that there is no much interference between one exercise and the rest (in fact there is no learning or specialisation), although this effect can never be ruled out completely. Finally, in adaptive tests, dependency between trials  is not only existent but characteristic. Actually, this dependency is exploited. The most general account of a task would be to consider that they are adaptive (i.e., they have memory as well), and non-adaptive tasks would be a special case. Even for a single task, we can have an adaptive test, provided we have a measure of instance difficulty or some other feature that we can use to change the distribution of instances of a task. We will deal with the  issue of instance difficulty later on.

}

\commentAGI{
\subsection{Asynchronous-time Stochastic Tasks}
}

\commentAGI{
Now we are going to give a more formal account about how to define general tasks computationally, which comply with the features in section \ref{sec:features} above. We want interactive tasks such that, in an episode, agent and environment can exchange inputs and outputs at any time. We will first choose asynchronous time for it, as this is needed in some tasks such as `response time' and other real-time problems. Apart from its need in these types of tasks, there are additional reasons for using asynchronous time  in reinforcement learning
\cite{shelton2014tutorial},  
and artificial life \cite{cornforth2003artificial,fates2013guided}. 
Even in cases where the task is alternating (e.g., a chess match), it is important to consider the time for each turn and the thinking time (one can think while the opponent is thinking, and both thinking times have to be considered).

Synchronous (or more precisely, alternating discrete time) interactive machines are based on a transition function, which is applied at each time point to change the state. The most common example is (PO)MDP. The transition function takes a state, and observation and a reward and produces an action. 
It goes from state to state indefinitely (even if it remains in the same state forever, there is some computation to apply at each time moment, the transition function). Transition functions can have access to the environments's memory. In this case, if the memory is not bounded we have an infinite number of states (no longer an MDP).  
In any case, even with a finite number of states because of the stochastic character there might be a different number of computational steps taken for each transition (there might even be some transitions that do not halt).

Asynchronous environments are not continuous-time POMDPs, which are based on transition functions and are handled with differential equations.  
In fact, synchronous environments are a special case of asynchronous environments where the environment  waits for the agent's action to issue observations and rewards. Intermediate rewards during the episode are also considered but, unlike synchronous environments, the correspondence of the total result as a discounted sum of rewards is not possible in general. In fact, the number of rewards per unit of time is not limited, so the final function that maps rewards to a result may be very varied (and it is part of the definition of the task). This is similar to the way rewards were defined in \cite{upsychometrics2}, an internal thing given to the agent, whereas the score or response for the episode was an external thing not necessarily given to the agent. 



}

Let us \commentAGI{now} give the definition of asynchronous-time interactive systems. 
In an asynchronous-time interactive system, there is a common time (which can be discrete or continuous, and can be virtual or real). Time  will be shared by all systems that interact. %
An interactive system is a machine with a program code, a finite internal discrete memory, one or more finite read-only discrete input (tape) and one or more finite write-only discrete outputs (tape). 
Agents and environments use the above definition and are asynchronous-time interactive systems. 
The inputs of agents are called observations and the outputs are called actions. 
As special features, these machines have access to a read-only time measurement and a source of randomness (either by an additional random instruction or a random tape). 
The programs for tasks and agents are constructed with a set of instructions that, if memory were infinite, would make the machine Turing-complete, and ultimately equivalent to a Turing machine, denoted by its program over a reference universal prefix Turing machine $U$.
This makes this definition very close to probabilistic Turing machines\commentAGI{\footnote{Probabilistic Turing machines with finite tapes (except the random tape) are like ``probabilistic linear bounded automata". 
This is exactly the type of computers we are used to and the ones we are able to build with the current paradigm. Note that this is different to subrecursive programming systems and other models of computation where it can be determined whether programs terminate, i.e., and even what they will compute.}} ---which are not exactly the same as non-deterministic Turing machines. In a probabilisitic Turing machine, only one course of action is taken, and no parallel computation is performed to keep all the alternative courses of actions. In fact, computable stochastic processes are usually associated to probabilistic Turing machines, and not to non-deterministic Turing machines.

\commentAGI{
We have already said that the machine will have access to a random source (through an instruction or an extra random tape).  
Some animals (e.g., flies, preys) behave in a random way to avoid being predated, and this behaviour does not require a very complex mechanism, 
just a few neurons being triggered on some environmental magnitudes acting as random number generators.
There is also access to time, 
which can be physical time, an approximation or a virtual time. 
But most importantly, for} \commentARXIV{For} the purpose of the analysis of computational steps, we consider that the machine will be able to stop momentarily, until a given time, through an instruction or special state 
{\sffamily sleep(t)}, which sets the machine to sleep until time $t$. \commentAGI{During the time the machine is sleeping, no operation is performed.} 

Some tasks will also have intermediate rewards. Rewards are just given through another extra tape, and are interpreted as a natural number. 
Rewards are optional. In case they exist, the result of an episode may depend on the rewards or not. This is important, as the general use of rewards in reinforcement learning, especially with discounted reward or through averaging gives the impression that the final result or response of an episode must always be an aggregation. For instance, in a maze, an agent may go directly to the exit and may require no reward. On the contrary, a more sluggish agent may require more positive indications and even with them cannot find the exit. Rewards can be just given to help in the finding of the solution, which does not mean that the higher the rewards the higher the results. 
Finally, the agent is able to see the result or score of an episode (a rational number) at the end through another special tape. 
A final reward can be given instead of or jointly with the result. 

While this is certainly more complex than other models of interactive machines, it accommodates the intuitive notion of task in many natural and artificial scenarios.

\comment{
$\Response$ of course can depende on doing the action at a proper time..


\begin{definition}\label{def:intsystem}
An interactive probabilistic machine is defined as a tuple $\left\langle {{\cal{T}}, \cal{S}}, {\cal{O}}, {\cal{I}}, \dot{s}, \dot{o} \right\rangle $, where 
${\cal{T}}$ is the time space,
${\cal{S}}$ is the state space,
${\cal{O}}$ is the output space,
${\cal{I}}$ is the input space,
$\dot{s}(s,i)$ is a transition rate function: ${\cal{S}} \times {\cal{I}} \Longrightarrow \Delta {\cal{S}} $, and
$\dot{o}(s,i)$ is an output function: ${\cal{S}} \times {\cal{I}} \Longrightarrow {\cal{O}} $.
\end{definition}

\begin{definition}\label{def:cogtask}
A cognitive task is defined as a physically computable interactive system $\left\langle {\cal{T}}, {\cal{S}}, {\cal{O}}, {\cal{I}}, \dot{s}, \dot{o} \right\rangle$ following definition \ref{def:intsystem} (with outputs being observations and inputs being actions), and a score transition function $\dot{u} : {\cal{U}} \times {\cal{S}} \times {\cal{O}}
\Longrightarrow \Delta {\cal{U}}$, with ${\cal{U}}$ being a bounded set in $\mathbb{Q}$.
\end{definition}

\begin{definition}\label{def:interface}
Given a task $\mu$ = $\left\langle {{\cal{T_\mu}}, \cal{S_\mu}}, {\cal{O_\mu}}, {\cal{I_\mu}}, \dot{s}_\mu, \dot{o}_\mu \right\rangle$ and an agent $\pi$ =$ \left\langle {\cal{T}}_\pi, {\cal{S}}_\pi, {\cal{O}}_\pi, {\cal{I}}_\pi, \dot{s}_\pi, \dot{o}_\pi \right\rangle$, an interface $\phi$ between them is a tuple of 
mappings  $\left\langle \phi_{\cal{T}}, \phi_{\cal{A}}, \phi_{\cal{Z}} \right\rangle$,
with 
${\phi_{\cal{T}}}: {\cal{T_\pi}} \Longrightarrow {\cal{T_\mu}}$ being the time mapping, 
${\phi_{\cal{A}}}: {\cal{O_\pi}} \Longrightarrow {\cal{I_\mu}}$ being the action mapping, 
${\phi_{\cal{Z}}}: {\cal{I_\pi}} \Longrightarrow {\cal{O_\mu}}$ being the observation mapping.
Given these mappings, we will take the domains of the task as a reference,
just making 
${\cal{T}} = {\cal{T_\mu}}$ the time domain, 
${\cal{A}} = {\cal{I_\mu}}$ the action domain, and
${\cal{Z}} = {\cal{O_\mu}}$ the observation domain.
\end{definition}

As we see in definitions \ref{def:cogtask} and \ref{def:interface}, the concept of `reward' does not exist as a separate entity. In case rewards are used, they are considered part of the task's output and part of the agent's inputs.
Note that in reinforcement learning, rewards and observations are considered in a separate way.
One straightforward way to work with rewards is to define task's outputs (${\cal{Z}} = {\cal{O_\mu}}$) as a tuple $\left\langle {\cal{L}}, {\cal{R}} \right\rangle$ where the second term is understood as a reward as in example XXXX
. Obviously, in order to use rewards, both task and agent have to treat part of their outputs and inputs (respectively) as a reward.

Note that the score function $\dot{u}$ is not used in the interface. Consequently, the score is not visible to the agent and cannot be calculated from the observation (alone) in general. Any appreciation the task wants to convey to the agent during the interaction must be done using observations (and possibly embedded rewards). In fact, it is important to see that score can be completely independent from rewards.
This may sound counterintuitive if compared to typical agents and environments in reinforcement learning, where performance is calculated as an aggregation of rewards. However, it is important to emphasise that reinforcement learning is about `learning' and many cognitive tasks are not about learning. In addition, on many occasions, even in learning scenarios, the performance cannot be derived as an aggregation of rewards. 

}

\commentAGI{
\subsection{Trials and results}
}

We consider tests that are composed of tasks (also called environments), usually denoted by $\mu$,  
and are performed by agents (also called policies or subjects),  
usually denoted by $\pi$.

The expected value of the response, return or result 
of $\pi$ in $\mu$ for a time limit $\tau$ is denoted by 
$\EResponse^{[\tau]}(\pi,\mu)$. 
The value of $\tau$ will be usually omitted as it is understood that it is part of the description of the task $\mu$. 
The $\EResponse$ function always gives values between 0 and 1 and we assume it is always defined. 
If the agent goes into a non-halting loop and stops reacting, this is not perceivable externally and may even lead to some non-zero $\EResponse$ (from the previous actions or because of the type of task). 


Now we need to extend the notation of 
$\EResponse(\pi, \mu)$ 
to consider several instances of the same task. Each attempt of a subject on one of the task instances is a trial or episode. 
$\EResponse^{[\TAUC \nuT\mapsto\nuE]}(\pi, \mu)$ 
returns the expected response of $\mu$ per trial with 
$\nuE$ consecutive episodes or trials by the same agent $\pi$ {\em without reinitialisation\footnote{Note that, if the test is not adaptive, instances have no memory, as they start from scratch. 
This `stochastic repeatability' is related to some other conditions (e.g., ergodicity) that are sometimes imposed or assumed on tasks where a pattern or some properties can endure indefinitely.}}. So actually it is not the same $\pi$ each time, if the agent has memory. 
$\nuE$ refers to the evaluation trials, which 
are used for the expected response (which is an average of all the evaluation trials and not a sum).
Note that the expected response is given because $\pi$ is non-deterministic and may lead to different situations from the very beginning. The distribution of what each instance of a trial will look like is inside the stochastic task. 
According to the task, the same instance can appear more than once, as in a sample with replacement. 
As the task can have memory, we can also have some tasks that are really working as if a no-replacement sampling were taking place. In order to do that, the task itself must keep track of the instances that have appeared or must use some kind of randomised enumeration. Also, tasks can be adaptive. In other words, instead of talking about sampling with or without replacement, it is the definition of the task that defines this. 

\commentAGI{
No waiting time or stop is considered between trials. If a task requires some resting time between trials, then this has to be included in the very trial and not in between trials.
}


With 
$\EResponse^{[\TAUC \mapsto 1]}(\pi, \mu)$, or simply $\EResponse(\pi, \mu)$ 
we denote that there is 
only one episode or trial (no repetitions). For instance, many tests are of this kind if items are completely unrelated, so each item has no influence on the following ones, although it is more applicable when we consider that the agent has no memory (or is reinitialised between trials). 
In general, especially if the items are related, for every 
$\nuE > 1$, we have that 
$\EResponse^{[\TAUC \nuT \mapsto \nuE]}(\pi, \mu) \neq 
\EResponse^{[\TAUC \nuT \mapsto 1]}(\pi, \mu)$ unless the agent has no memory between episodes.

\commentAGI{
Tasks with high values of $\tau$ will imply that episodes are long, while high values of $\nu$ mean that we make many repetitions. Note that some abilities are related to good results after very few repetitions (i.e., to understand a concept fast). This speed is understood in many ways, but one is clearly how many `examples' or `instances' are needed. Note that many machine learning techniques require many examples (e.g., deep learning \cite{Arel-etal2010}), many repetitions (e.g., Q-learning \cite{watkins1992q}) or large $\tau$ (e.g., AIXI \cite{veness2011monte}). 
In the previous example about the `relative numerousness', $\tau$ is not very relevant as the task displays the boards (or panels or dishes) for a few seconds ($\tau$ may be 5 physical seconds). However, $\nu$ is important, and we usually require a number of training trials (so that the animal can learn the task) and then a series of test trials. 
}

\commentAGI{
Note that if the {\em task} has no memory, this does not allow for an evolving distribution (e.g., a kind of task first and then switch to other tasks, or some kind of cumulative or adaptive tasks). Tasks with memory would be useful for adaptive tests. In this paper, unless stated otherwise, all constructs are valid with tasks with memory, even if we do not explore adaptive tests, just the fundamentals (such as difficulty, which is required for adaptive tests). 
}

\commentAGI{

\subsection{Examples}

In Fig.~\ref{fig:numerousness}, we saw an example of the `relative numerousness' task $\mu_{num}$.
This can be seen as a stochastic task class where the agent sees two rectangular grids (representing plates) where we have some black spots on it.
The action is just choosing left or right. 
If the choice is correct, the agent receives a response (and reward) of 1. Otherwise, it receives 0. 

For this task ($\mu_{num}$ in Table \ref{tab:tasks}), we have $4 \times 4$ `cells', with the number of dots in each panel going uniformly from 1 to 16. The size of each dot is uniformly distributed between 0.2 and 1, with 1 being the diameter of the cell. In case the two panels had exactly the same number of dots, the pair would be discarded and a new one would be generated. The use of different dot sizes is used to prevent subjects from choosing the panels exclusively (or mostly) by their overall darkness (if there are more dots and all are equal sized then the panel is always darker overall). In many studies, 80\% success rate is considered as a level where the subject is considered to perform the task successfully. 

It is relatively easy to implement an agent that processes the image, recognises the shapes and counts the dots. However, we are interested in seeing that it is also possible to score well in this task with an agent that does not count at all. This agent, $\pi_1$ performs a Monte Carlo approach and (virtually) throws $n$ points randomly inside the panel. It calculates the darkness of the panel as the percentage of points that are black (i.e., it is inside a dot). At the end, the darkness of both panels is compared and the least dark is chosen. If $\pi_1$ uses $n=100$ points for each panel, the agent is able to score $0.8675$. 
 Note that even if there are $(4^2-1) \times (4^2-2) = 210$ different number comparisons, the possible cell locations of the dots and their different sizes make a virtually infinite number of different instances.  Different results are obtained if the number of points of the Monte Carlo method is changed. For instance, if $\pi_2$ only uses $n=50$ then $\EResponse(\pi_2, \mu_{num}) = 0.8495$. Still, if $\pi_3$ only uses $n=10$ then $\EResponse(\pi_3, \mu_{num}) = 0.746$. Actually, with just one point, $\pi_4$ can still do significatively better than random: $\EResponse(\pi_4, \mu_{num}) = 0.575$. Clearly, the computational cost decreases from $\pi_1$ to $\pi_2$.


The response time task ($\mu_{response}$ in Table \ref{tab:tasks}) is an interesting task to analyse. We could have a policy $\pi_1$ that is constantly checking the input to see if a response is needed. Assuming very high speed (e.g., it can check the input, process it and see whether it has to react or not one million times per second), this $\pi_1$ would score almost perfectly. However, it would also use many computational steps. Another algorithm $\pi_2$ could just check 10 times per second (by using the instruction {\sffamily sleep(t)}, with $t=0.1$s), and get a reasonable good result with much less computational cost than $\pi_1$ (it is not exactly 100,000 less because when the signal is not there the instructions to be executed are expected to be fewer than when the signal is there).

These two examples stress the issue of computational complexity and how it is interpreted in asynchronous tasks.

}

\section{Task difficulty}\label{sec:difficulty}

\commentAGI{
The first thing to clarify about difficulty is whether we apply it to the {\em generation} of the policy or the {\em application} of the policy. 
The generation phase can be innate (by programming or nature) or acquired (through training or learning).
In comparative psychology and artificial intelligence it is usual to have these two phases.
It is very important to determine which phase we are referring to when talking about difficulty. For instance, if we evaluate the ability of an animal of being able to do a task that involves counting, what we want to know is whether the animal can acquire this ability. 
If we evaluate the ability of making calculations (e.g., addition), we are clearly assuming that the system already has the algorithm for addition, and we are just examining how well they do. This is clearly the case in many specific-task evaluation, such as driving a car, game-playing, etc.
The confusion comes because in both cases we will evaluate the performance on the task in the same way.
}

\commentAGI{
Despite the same evalution, the notion of difficulty must be understood very differently. For instance, the difficulty of the generation phase usually refers to tasks with many instances (how difficult is it to learn to add from examples), while application usually refers to instances (how difficult ``3+2" is compared to ``234+998"). In this section we will focus on task difficulty, leaving instance difficulty for the next section.
For instance, in the relative numerousness task, the generation difficulty depends on how much it takes to program the algorithm for this task, the evolutionary cost of acquiring the algorithm or the learning cost of acquiring the algorithm. 
}

\commentAGI{
The difficulty of solving a stochastic task can be assessed by \cite{liu2012task} (1) looking at the complexity of the task (this is known as a structuralist approach), (2) looking at the complexity of the policy (or the resources that are required by the subject) or (3) looking at the interaction between task and subject. 
}



Our view of the generation difficulty is an ``algorithmic difficulty", which is basically the computational steps required to build the policy algorithm, which depends on the tolerance level of the task, the interaction and hints given by the task, the algorithm length, its computation cost and its verification cost. 
We now see all these components below.

\commentAGI{
\subsection{Agent resources, acceptability and interaction in asynchronous environments}
}

The first thing we will require is the length of a policy. The length of an object $x$, denoted by 
$\Length(x)$ expresses the length of a string using a binary code for the object. This function 
can be applied to tasks and agents. 
\commentAGI{There is an important thing to consider here. If a program has the ability to self-modify, as it happens with self-improvement agents, then when we measure $\Length(\pi)$ of an agent $\pi$ during a series of trials, the value might change. 
However, one program can get extremely short by moving all the code to memory. Consequently, analysing the evolution of the program during the execution of several trials is like analysing how memory is evolving, so we will just consider the program $\pi$ as it was before the evaluation.
}

\comment{
We will have this into account and introduce 
$\ELength^{[\TAUC \nuT \mapsto \nuE]}(\pi,\mu)$ for a time limit 
 $\tau$ for each trial, where the length at trial $i$ is measured {\em before} the trial $i$ begins. Note that this length is an expected response, as it depends on the stochastic task. With this, it is clear that $\Length(\pi) =
\ELength^{[\TAUC  
\mapsto 1]}(\pi,\mu)$.
}


The second thing we will require is the computation steps taken by a policy. In synchronous environments, 
one option may be to add all the steps taken for all time cycles, but this clearly depends on the resolution of the discrete time. Also, many transition functions may be just idle transitions, where the agent is just checking whether something is happening. But imagine an agent that wants to wait for 10,000 transitions. Even if very few operations are executed in each transition, these transitions count. To avoid this problem another option is to calculate the maximum, as done in \cite{HernandezOrallo-Dowe2010} with the so-called $Kt^{max}$.  
This is a very rough approximation, as one single peak can make this very large. The mean or sum do not behave better, either. 

Fortunately, here tasks are defined as asynchronous. When the agent needs to wait until a situation or time is met, if the instruction  {\sffamily sleep(t)} is used, we should not consider all these `waiting' times for the computational steps. 
With this interpretation, the expected\footnote{This has to be `expected' if we consider stochastic environments or agents.} execution steps of $\pi$ per trial when performing task $\mu$ are denoted by 
$\ESteps^{[\TAUC  \nuT \mapsto \nuE]}(\pi,\mu)$ for a time limit 
 ($\tau$) given by the task for each trial. Note that we consider all the computational steps performed by $\pi$ during all the $\nuE$ {\em evaluation} trials for this expected value (they are not added, though).  
If at any moment $\pi$ enters an infinite loop\footnote{Here, we are not concerned about halting, but rather that the number of steps is finite before the time limit $\tau$.}, then 
$\ESteps^{[\TAUC \nuT \mapsto \nuE]}(\pi,\mu)$ is infinite. 
As we are using stochastic agents and environments, it is sufficient that one possible combination of the trials leads to non-termination such that the expectation is infinite. 

The third thing is about memory requirements (space). What if a policy requires much more memory than another? 
\commentAGI{This is also important in the context of several trials if the policy requires the memorisation of the information of previous trials.
We would do similary for the internal memory used by the policy, considering that there are instructions to ask for more memory and free memory (or we can record up to where the algorithm reaches if there is an internal tape). The notation is 
$\EMemory^{[\TAUC \nuT \mapsto \nuE]}(\pi,\mu)$. 
}
In this paper we will not consider space because (1) the use of $n$ bits of memory requires at least $n$ computational steps, so the latter are going to be considered anyway and (2) steps and bits are different units.

The fourth thing is verification. When we discuss the effort about finding a good policy, there must be some degree of certainty that the policy is reasonably good. As tasks and agents are stochastic, this verification is more cumbersome than in a non-stochastic case. We will discuss about this later on in the paper.

For the moment, we will just combine the length of the policy and the computational steps, by defining 
$\ELS^{[\TAUC \nuT \mapsto \nuE]}(\pi,\mu) \triangleq \Length(\pi) + \log \ESteps^{[\TAUC \nuT \mapsto \nuE]}(\pi,\mu)$. 
Logarithms are always binary. We will explain later on why we apply a logarithm over $\ESteps$.

The fifth thing is the tolerance level of the task. In many cases, we cannot talk about difficulty if there is no threshold or limit for which we consider a policy acceptable. For instance, how difficult is a response time task? It depends on where we put the threshold. How difficult is pacman? It depends on how many points or time we want to achieve. It is true that some tasks have a response function $\Response$ that can only be 0 or 1, and difficulty is just defined in terms of this goal. But many other tasks are not binary (goal-oriented), and we need to establish a threshold for them. 
 In our case, as the return function $\Response$ goes from 0 to 1, we can take 1 as the best response and set the threshold on $1 - \epsilon$. With this we first consider the notion of acceptability. 

We define acceptability in a straightforward way. The set of acceptable policies for task $\mu$ given a tolerance $\epsilon$ is given by 

\begin{equation}\label{eq:acceptN}
\AccSet^{[\epsilon, \TAUC  \nuT \mapsto  \nuE]}(\mu) \triangleq \{ \pi \::\: \EResponse^{[\TAUC \nuT \mapsto \nuE]}(\pi, \mu) \geq 1- \epsilon \}
\end{equation}
%
%
%
%
Note that the combination of the expected value with a tolerance greater than 0 makes that the agent can do terribly wrong in a few instances, provided it does well on many others. \commentAGI{While the expected value corresponds to the mean, we could use another statistic.}



\commentAGI{
The sixth thing is the interaction and hints given by the task. This can be during the task (through rewards or other observations) or throughout several trials. During the task, algorithms can use past experience and rewards to solve the task. For instance, $\mu_{imit}$ in Table \ref{tab:tasks} can be solved by simply observing and copying, so actually the policy is an algorithm that does this. Similarly, if we have an agent that is not reset after each trial, the algorithm can just learn from previous trials. For instance, $\mu_{lrote}$ in Table \ref{tab:tasks} is solvable by an algorithm that memorises the correct string from a previous trial. 
In general, we can have many different kinds of policies: 
\natlangprog{forever do action 1, wait(1), do action 2, wait(1)}, which ignores the observations from the task completely,  
\natlangprog{forever output what $\mu$ outputs}, which uses observations but ignores previous trials, 
\natlangprog{execute code1, if result of previous trial is lower than 0.5 then execute code2 in the following trials}, which uses the results of previous trials, 
\natlangprog{execute random actions every 1 units of time. Memorise those actions that generate some change of observations. Repeat them on the following trials}, which uses the observations of previous trials, and 
\natlangprog{execute random actions every 1 units of time. Memorise those actions that receive positive rewards. Repeat them on the following trials}, which uses the observations and rewards (if there are) of previous trials. 
But some other `meta-algorithms' are equally valid, such as \natlangprog{try algorithms randomly from a given set of algorithms. If one has been good for the past five trials, use it for ever} or \natlangprog{use search heuristics of type 1 for the first 100 trials. If unsuccessful, switch to heuristics of type 2}. These are just examples of whatever algorithm that can be used, including self-improving algorithms. 
} 

\commentAGI{
The consideration of stochastic tasks is fundamental. For instance, consider the ``relative numerousness'' task $\mu_{num}$ again in Table \ref{tab:tasks}. For each instance, the solution is just `left' or `right'. The information that is needed is just one bit. If we just put one possible instance in a task (i.e., a non-stochastic task) ---and we knew that the task is not stochastic---, then just one repetition of the {\em same task instance} would be enough to find the solution, which will be very short (e.g., \natlangprog{choose the left board}). If the task can generate a great or infinite number of instances, then the possibility of rote learning is reduced, and the policy would incorporate some generalisation. 
}




\commentAGI{
\subsection{Difficulty as minimum resources}\label{sec:KandKt}
}

Having the the above issues into account we can define a first parametrised version of difficulty. 
Bear in mind that these are general expressions whose goal is to understand what a function of difficulty is. In many tasks, though, we may use a more practical (and particular) function of difficulty. 




And now we are ready to link difficulty to resources. This is usual in algorithmic information theory, but here we need to calculate the complexities of the policies (the agents) and not the problems (the tasks). 
\commentAGI{
So, our first approach is to evaluate difficulty as the length of the shortest acceptable policy: 

\begin{equation}\label{eq:K}
K^{[\epsilon, \TAUC  \nuT \mapsto  \nuE]}(\mu) \triangleq \min_{\pi \in \AccSet^{[\TAUC  \epsilon, \nuT \mapsto  \nuE]}(\mu)} 
\Length(\pi) 
\end{equation}

The use of the notation $K$ and the structure of the definition make it clear that this can be understood as a version of Kolmogorov complexity for tasks, where instead of talking of the shortest program that generates a string, we talk about the shortest program that solves the task.

%
Note that $K$ is not only parametrised with a tolerance $\epsilon$ but also with 
the number of 
evaluation trials. 
So our notion of difficulty depends on these parameters. 
We could think about letting 
be unlimitted, so we would have $K^{[\epsilon, 
 \mapsto  \infty]}(\mu)$. 
This allows programs that use several trials, so we can have a policy $\pi$ that just does \natlangprog{enumerate all possible programs and execute each of them on as many trials as needed and choose the best one for the subsequent trials}. Let us call this strategy\footnote{In a way, this strategy is like an AIXI-like algorithm \cite{Hutter05}.}, $\pi_{find-L-best}$. Assuming there is a finite acceptable policy, the length of this program $\pi_{find-L-best}$ 
could be taken as an upper bound for $K$ because this program is going to find the policy if given infinite trials, just by enumeration. For some tasks, of course, there might be other programs that could be shorter than $\pi_{find-L-best}$. For instance, in the simple imitation task $\pi_{imit}$, it is expected that the coding of the program \natlangprog{copy the observation to the action} is shorter than $\pi_{find-L-best}$. Examples for some of tasks are shown on Table \ref{tab:tasksopt}.

We can also consider $K^{[\epsilon, \infty, 
\mapsto  1]}(\mu)$, but in this case it cannot be a program that searches for the policy across several trials. For some kinds of tasks, especially those that do not give partial indications during the task, this will account for the shortest policy that gives an $\epsilon$-acceptable solution {\em without looking at the task at all}. For others, the task will provide the required information (like an input) but the interaction will be just that. For instance, in the relative numerousness task, depending on the tolerance, the Monte Carlo policy could be a good option, as it is a very short policy. 
Actually, a version of the Monte Carlo with a huge amount of points would be better, disregarding its high computational cost, since computationl steps are not taken into account. 
For the simple imitation task \natlangprog{copy the observation to the action} would still be chosen.
It may seem counterintuitive to analyse a situation with just one trial with a policy that cannot be found with just one trial (the chances are actually about $2^{-L}$), but here we are trying to measure difficulty. 
Examples for some of the taks are shown on Table \ref{tab:tasksopt}.

The problem about $K$ is that it does not take computational cost into account (this also makes it uncomputable). 
} 
A common solution, inspired by Levin's $Kt$ (see, e.g., \cite{Levin73} or \cite{Li-Vitanyi08}), is to define:
%
%
\begin{equation}\label{eq:Kt}
Kt^{[\epsilon, \TAUC  \nuT \mapsto  \nuE]}(\mu) \triangleq \min_{\pi \in \AccSet^{[\TAUC  \epsilon, \nuT \mapsto  \nuE]}(\mu)} {\ELS^{[\TAUC  \nuT \mapsto  \nuE]}(\pi,\mu)}
\end{equation}
Note that the above has two expectations: one in $\ELS$ and another one inside $\AccSet$. The interpretation of the above expression is a measure of effort, as used with the concept of computational information gain with $Kt$ in \cite{HernandezOrallo00d}. 


\commentAGI{
We first consider $Kt^{[\epsilon, \TAUC  
\mapsto \infty]}(\mu)$. With this we allow for as many trials during 
evaluation. In other words, effort can be put in finding the policy, but the policy must be efficient. Again, this would sometimes end up choosing \natlangprog{enumerate all possible programs and execute each of them on as many 
trials as needed and choose the best efficient one}. 
This happens because despite its enormous computational cost, when the trials go to $\infty$ the algorithm may finally find the particular policy and start exploiting it. As it is the expected value for the infinite number of trials that counts, this policy is efficient for an infinite number of trials. Let us call this strategy\footnote{In a way, this strategy is like an AIXItl-like algorithm \cite{veness2011monte}.}, $\pi_{find-LS-best}$, which is again of not much practical use.  Examples for some of the task are shown on Table \ref{tab:tasksopt}.

We can compare with $Kt^{[\epsilon, \TAUC  
\mapsto  1]}(\mu)$. In this case, the meta-policies such as $\pi_{find-LS-best}$ are avoided, but we have that the policy cannot take advantage of previous trials. In a way, this version is measuring difficulty when the agents have no memory (or are reinitialised).

All these options are summarised in Table \ref{tab:tasksopt}, which shows the tasks introduced in Table~\ref{tab:tasks} with the values of several difficulty functions. 
We see that in some cases, the previous trials or part of the trial itself can be used to learn a pattern (as shown in the last columns). 

\begin{table}
{
\footnotesize
\begin{center}
\begin{tabular}{p{1cm}p{3.2cm}p{3.2cm}p{3.2cm}p{3.2cm}p{0.2cm}}
 Task             & $K^{[\epsilon, \TAUC  \mapsto \infty]}(\mu)$       & $K^{[\epsilon, 
\mapsto  1]}(\mu)$           & $Kt^{[\epsilon, \TAUC  \mapsto \infty]}(\mu)$    & $Kt^{[\epsilon, \TAUC  \mapsto  1]}(\mu)$  & I \\ \hline
 $\mu_{num}$      & $\pi_{find-L-best}$ $\Longrightarrow$           & \natlangprog{Monte Carlo policy with many points}   & $\pi_{find-LS-best}$ $\Longrightarrow$      & \natlangprog{Monte Carlo policy with a few points} & M \\ 
 $\mu_{RPM}$      & $\pi_{find-L-best}$ $\Longrightarrow$           & \natlangprog{Shortest rpm solver}                     & $\pi_{find-LS-best}$ $\Longrightarrow$      & \natlangprog{LS-optimal rpm solver} & M \\ 
 $\mu_{Ctest}$    & \natlangprog{Monte Carlo search on sequence patterns} & \natlangprog{Monte Carlo search on sequence patterns} & \natlangprog{Levin search on sequence patterns}   & \natlangprog{Levin search on sequence patterns}   & -  \\ 
 $\mu_{response}$ & \natlangprog{react with minimum sleep periods}        & \natlangprog{react with minimum sleep periods}        & \natlangprog{react with fair sleep periods}       & \natlangprog{react with fair sleep periods} & -\\ 
 $\mu_{maze}$     & \natlangprog{right-hand traversal}                    & \natlangprog{right-hand traversal}                    & \natlangprog{LS-optimal traversal}                & \natlangprog{LS-optimal traversal} & -\\ 
 $\mu_{pacman}$   & $\pi_{find-L-best}$ $\Longrightarrow$           & \natlangprog{eat and escape from predators}           & $\pi_{find-LSbest}$ $\Longrightarrow$       & \natlangprog{eat and escape from predators} & M \\ 
 $\mu_{trans}$    & $\pi_{find-L-best}$ $\Longrightarrow$           & \natlangprog{shortest-translator}                     & $\pi_{find-LSbest}$ $\Longrightarrow$       & \natlangprog{LS-optimal-translator}  & M \\  
 $\mu_{imit}$     & \natlangprog{copy the observation to the action}      & \natlangprog{copy the observation to the action}      & \natlangprog{copy the observation to the action}  & \natlangprog{copy the observation to the action} & -\\  
 $\mu_{guess}$    & \natlangprog{guess randomly until reward}             & \natlangprog{guess randomly until reward}             & \natlangprog{guess randomly until reward}         & \natlangprog{guess randomly until reward} & -\\  
 $\mu_{eidetic}$  & \natlangprog{repeat what has been seen}               & \natlangprog{repeat what has been seen}               & \natlangprog{repeat what has been seen}           & \natlangprog{repeat what has been seen} & -\\  
 $\mu_{srote}$    & \natlangprog{output decompressible TEXT}              & \natlangprog{output decompressible TEXT}              & \natlangprog{efficiently decompressible TEXT}     & \natlangprog{efficiently decompressible TEXT} & -\\   
 $\mu_{lrote}$    & \natlangprog{copy text from previous trial}           & \natlangprog{output decompressible TEXT}              & \natlangprog{copy text from previous trial}       & \natlangprog{efficiently decompressible TEXT} & H \\   
 $\mu_{add}$      & \natlangprog{addition by incrementing}                & \natlangprog{addition by incrementing}                & \natlangprog{efficient addition}                  & \natlangprog{efficient addition} & -\\ 
 $\mu_{sudoku}$   & \natlangprog{exhaustive sudoku search}                & \natlangprog{exhaustive sudoku search}                & \natlangprog{efficient sudoku solver}             & \natlangprog{efficient sudoku solver} & -\\   \hline
\end{tabular}
\caption{Some of the illustrative tasks defined in Table \ref{tab:tasks} and the kind of policies that could lead to the minimisation of the complexity measures $K$ or $Kt$ with or without history.  
The cases with $\mapsto  1$ are blind to previous trials, either because there is not any previous trial or because the agent has no memory or is reinitialised for each trial. 
For those where $\pi_{find-L-best}$ or $\pi_{find-LS-best}$  appears, we assume there is no better policy (in terms of L or LS) that achieves $\epsilon$. The last column shows the few cases where there is a difference between many trials or just one trial. 
This effect of incrementality can be reflected in terms of algorithm self-improvement or meta-search, represented by M (and we also show a right arrow meaning that in the end it will be executing the algorithm on the right), and the use of history in other ways, H.}\label{tab:tasksopt}
\end{center}
}
\end{table}

The cases of $\mu_{srote}$ and  $\mu_{lrote}$ are significant. Both are just non-stochastic tasks that can be just done by rote-learning once a couple of instances are seen. In fact, this is an extreme case of stochastic tasks where there is a relevant part that is constant. 
We see that both are considered simple when there are several trials (either by memorising a short string or by using a policy that just memorises and copies it from the previous instance). 
The use of this copy\&paste policy can only be appreciated when the size of the thing to be copied has a certain value (for short strings, nothing can beat the policy with the string itself). This is a crucial example of why a blind search that tries to find policies without looking (and learning from) previous trials can be less efficient than another looking at previous trials. In other words, in an interactive scenario, an enumeration-like search might not be the best thing to do. This has been realised in some modifications of Levin's universal search for agents. 

} 

\commentAGI{
$Kt$ puts together the length of the policy and the computationl steps it takes, including both searching and execution. This makes it consider any meta-search procedure inside the policy, provided that this (with the information of the task) is more effective than getting the policy from nothing. In other words, if the task gives hints and there is a short and fast procedure that can use these hints to find the policy, then the exrpesion of $Kt$ will give this policy. 
Anyway, the use of $Kt$ is related to Levin's universal search \cite{Levin73,Li-Vitanyi08}, as if we measure the computational steps that are required to find the algorithm that minimises $Kt$ we have to go approximately through $2^{\Length(\pi)}$ programs with their corresponding execution steps of $\ESteps$. By multiplying these two terms and calculating a binary logarithm we have $Kt$. This connection, which will be better described later on, allows us to define the unit of difficulty as the logarithm of computational steps. 
}

\commentAGI{
We have said why $Kt^{[\epsilon, \TAUC  
\mapsto \infty]}(\mu)$ is not completely satisfying, since for some problems, the meta-search policy $\pi_{find-LS-best}$ is chosen. The reason is that, despite the great computationl effort of $\pi_{find-LS-best}$, this can concentrate on the first millions of trials and then progressively switching the behaviour so that the best policy so far is used. The problem is because we are calculating $Kt$ for an infinite number of trials. 
We have also discussed that $Kt^{[\epsilon, \TAUC  
\mapsto 1]}(\mu)$ cannot take history of previous trials. 
}

An option as an upper-bound measure of difficulty \commentAGI{in between} would be $\Hardness(\mu) \triangleq Kt^{[\epsilon, \TAUC  
\mapsto  \nuE]}(\mu)$, for a finite $\nuE$ and given $\epsilon$. That means that any search has to be done during evaluation and the computational steps here will be taken into account (if $\nuE$ is not too large). 
In general, if $\nuE$ is very large, then the last evaluations will prevail and any initial effort to find the policies and start applying them will not have enough weight. On the contrary, if $\nuE$ is small, then those policies that invest in analysing the environment will be penalised.   
%
That means that we will need to invest as little computation steps and trials to find an acceptable policy and then execute it for as many trials as needed to make $\EResponse\geq 1- \epsilon$. This is in a way a trade-off between exploration and exploitation.
It also requires a good assessment of the metasearch procedure to  {\em verify} the policy so it can go to exploitation.
In any case, the notion of difficulty depends, in some tasks, on $\nuE$. We will come back to this issue later on, as we will analyse the `verification cost', and how the number of trials $\nuE$ can be derived by a confidence degree such that the policy solving the problem is found and the trials can stop.

\section{Task instances, task composition and decomposition}\label{sec:instance}

Up to this point we have dealt with a first approach to {\em task} difficulty. A task includes (infinitely) many task instances. What about {\em instance} difficulty? Does it make sense? In case it does, instance difficulty would be very useful for adaptive tests, as we could make the stochastic task adaptive and start with simple instances and adapt their difficulty to the ability of the subject (as in adaptive testing in psychometrics). 

\commentAGI{
However, there are many confounding factors to determine the difficulty of a single instance.  For instance, for a division task  we may have these two instances: 6/3 and 1252/626. If the task is stochastic and includes many divisions, a policy that actually makes divisions will pay off. 
But if we create a task with just 6/3 or 1252/626 as only instances, in both cases the solution would be just 2, which is not only equal for both instances, but also a value that has no relation whatsoever to the difficulty of these instances.
}

The key issue is that instance difficulty must be defined {\em relative to a task}. 
At first sight, the difference in difficulty between 6/3 and 1252/626 is just a question of computational steps, as the latter usually requires more computational steps if a general division algorithm is used.
But what about 13528/13528? It looks an easy instance. Using a general division algorithm, it may be the case that it takes more computational steps than 1522/626. If we see it easy is because there are some shortcuts in {\em our} algorithm to make divisions. 
These shortcuts are frequently applied instead of the general procedure. 
 One of the shortcuts would be to return 1 if both arguments are equal. Of course, we can think about algorithms with many shortcuts, but then the notion of difficulty depends on how many shortcuts it has. In the end, this would make instance difficulty depend on a given algorithm for the task (and not the task itself). This would boil down to the steps taken by the algorithm, as in computational complexity. 
\commentAGI{For the relative numerousness task, for instance, the difficulty of an instance would be radically different if we are thinking about a counting policy (for which all instances are approximately equally easy) or we are thinking about a Monte Carlo policy (which depends on the difference in the total area of the circles, as the algorithm can stop when the difference is statistically significant).} 

We can of course take a structuralist approach, by linking the difficulty of an instance to a series of characteristics of the instance, such as its size, the similarities of their ingredients, etc. This is one of the usual approaches in psychology and many other areas, including evolutionary computation, but does not lead to a general view of what instance difficulty really is. For the divisions above, one can argue that 13528/13528 is more regular than 1252/626, and that is why the first is easier than the second. However, this is false in general, as $13528^{13528}$ is by no means easier than any other exponentiation. 

Some other approaches also link the difficulty of an instance or problem to the ``probability of failure'' \cite{bentley2004empirical} or to the ``probability-of-failure and mean time-to-solution''  \cite{ashlock2010evolution}. The probability of failure can be defined in terms of one policy (so we would have again a notion of difficulty dependent to the best policy solving the task), but another perspective is 
``the likelihood that a randomly chosen program will fail for any given input value'' \cite{bentley2004empirical}. This is interesting. Apparently, it looks like the population-based approach in psychology (apply the instance to some individuals and record times and success rates), as it is based on a population of programs. 

Here, we have several problems to follow this idea. We would need a population\footnote{We could assume a universal distribution of policies. This is related to the solution presented in this paper, since the shortest policies have a great part of the mass of this distribution.}. Also, we have that difficulty depends on computational cost and success rates, which are expressed in very different units.  
%
%
If the difficulty of a task is 8 (in logarithm of steps), what does it mean if we say that one of its instances has a difficulty of 0.3 (in proportion)? 
In any case, we may agree that computational cost and success rate are relevant, but they do not work in this way as a function of difficulty.


\commentAGI{
\subsection{Instance difficulty as rareness}
}

Instead of considering all policies\commentAGI{\footnote{As said above, we could also consider a universal distribution of policies, which would give a high probability to the best policy.}}, we can consider the best policy. The insight comes when we see that best policies may change with variable values of $\epsilon$. This leads to the view of the relative difficulty of an instance with respect to a task {\em as the minimum $\ELS$ for any possible tolerance of a policy such that the instance is accepted}. 

In order to formalise this concept, we must first formalise the notion of instance. For stochastic tasks, an instance is simply the very task for which its random behaviour is fixed. This can be obtained with the underlying model by setting a fixed string to the random tape or by setting a seed to the random generator (as in many computer languages). We denote by $\mu^\sigma$ an instance of $\mu$ by setting seed $\sigma$.

We first define the set of all optimal policies for varying tolerances $\epsilon_0$ as:
\begin{equation}\label{eq:argminepsi}
Opt_{\ELS}^{[\TAUC  \nuT \mapsto  \nuE]}(\mu) \triangleq \left\{ \argmin_{\pi \in \AccSet^{[\TAUC  \epsilon_0, \nuT \mapsto  \nuE]}(\mu)} {\ELS^{[\TAUC  \nuT \mapsto  \nuE]}(\pi,\mu)} \right\}_{\epsilon_0 \in [0,1]} 
\end{equation}
\noindent And now we define the instance difficulty of $\mu^{\sigma}$ with respect to $\mu$ as: 
\begin{equation}\label{eq:Ktinstance}
\Hardness^{[\epsilon, \TAUC  \nuT \mapsto  \nuE]}(\mu^{\sigma}|\mu) \triangleq \min_{\pi \in Opt_{\ELS}^{[\TAUC  \nuT \mapsto  \nuE]}(\mu) \cap  \AccSet^{[\TAUC  \epsilon, \nuT \mapsto  \nuE]}(\mu^{\sigma})} {\ELS^{[\TAUC  \nuT \mapsto  \nuE]}(\pi,\mu)} 
\end{equation}
\noindent The interpretation of the formulae above is as follows. Take all the optimal policies (in terms of $\ELS$) for varying values of $\epsilon$. Sort them by their $\epsilon$ increasingly. The first one that is acceptable for $\mu^{\sigma}$ gives {\em the best policy for $\mu$ that covers $\mu^{\sigma}$}. The $\ELS$ of this policy is the relative difficulty of $\mu^{\sigma}$ with respect to $\mu$. Note how the order of the minimisation is arranged in equations \ref{eq:argminepsi} and \ref{eq:Ktinstance} such that for the many policies that only cover $\mu^{\sigma}$ but do not solve many of the other instances, these are not considered because they are not in $Opt_{\ELS}$.

\commentAGI{
Let us see this for the relative numerousness task. Imagine the instance in Figure~\ref{fig:numerousness}. Let us choose a task tolerance $\epsilon = 0.1$, which we call the reference tolerance. Now consider all the possible policies solving the original task (considering many instances) when we vary the tolerance. For instance, for tolerances $\epsilon_0$ from $1$ to $0.5$ we have that the best policy is most likely one that always chooses left (or right), assuming that we have a balanced proportion of instances where the answer is left or right. For these tolerances, the error of this policy will always be acceptable. However, the error for $\mu^\sigma$ will be worse than the reference task tolerance level sets ($\epsilon=0.1$). The interest thing comes next, when we increase the tolerances. There might be a policy for tolerances $\epsilon_0$ 0.3 or 0.2 such that is also a policy for $\mu^{\sigma}$ with the reference task tolerance $\epsilon=0.1$. In this case, the $\ELS$ of this policy would be the difficulty of the instance. In other words, difficulty of an instance is the minimum effort for the whole task such that the instance is well covered. 
}

This notion of relative difficulty is basically a notion of consilience with the task. If we have an instance whose optimal policy is unrelated to the optimal policy for the rest, then this instance will not be covered until the tolerance becomes very low. Of course, this will depend on whether the algorithmic content of solving the instance can be accommodated into the general policy. This is closely related to concepts such as consilience, coherence and intensionality \cite{hernandez2000philosophica,hernandez2000explanatory,HernandezOrallo99a,HernandezOrallo00c,HernandezOrallo00d}. 
\commentAGI{If the instance is an outlier then it will be more difficult because it requires extra information to accommodate into the policy but also because the probability that it appears as part of the policy from the task is very low and hence it is hard that this case could be covered. In a way, difficulty is a notion of `rareness' ---in some senses of the term, special and unlikely.}

\commentAGI{
A different case is when there are many instances of some kind that are different from the rest of instances in the task. In this case, it is not an instance that is rare, but a set of instances, and it is better to analyse this in terms of task decomposition, as we will see in the following section. 
}

\commentAGI{
Finally, it must be said that equation~\ref{eq:Ktinstance} might be undefined for some instances, as none of the optimal policies for varying values of $\epsilon_0$ is able to cover it appropriately. This of course implies that in these cases there is no policy for the task with no tolerance ($\epsilon_0 = 0$). This is related to whether we define tolerance with respect to 1 or with respect to the best policy. In the latter case, the acceptable policy with no tolerance would always exist. But still, some instances might not be covered. That does not imply necessarily that there are no policies for these instances, but that there is no acceptable policy for these instances such that it is also an acceptable policy for all the other instances.
}





\commentAGI{
We can now see another example. 
For instance, in a task where the agent has to guess the following symbol in a letter series, such as the task $\mu_{Ctest}$ in Table~\ref{tab:tasks}  or Thurstone's letter series \cite{ThurstoneACE}, we may wonder why the series $aaaaaaa$ seems easier than $aacaeag$. 
Two explanations are here. First, given the previous definition, we can see that for high tolerance levels some simple policies may solve some series (e.g., a program that just solves arithmetic and geometric series would solve the first but not the last one). As a result, these simple incomplete policy would score some results if arithmetic and geometric series are a relevant proportion of all series. 
 This is exactly what the program passing IQ tests from \cite{sanghidowe2003computer} did, using some predefined rules for some common sequences. This would actually give a grading of instances, which some of them being in the same class (each class given by each of the policies returned by \ref{eq:argminepsi}). Second, we can of course assume the best policy overall (e.g., by considering the given tolerance $\epsilon$ or tolerance 0). The policy in this case, as shown in Table~\ref{tab:tasksopt}, would be a kind of Levin's search on the possible patterns. The difficulty would just be the computational steps of using this algorithm for the policy. As we mentioned above, there is a connection between the logarithm of the steps required by Levin's search and $Kt$, the measure of instance difficulty that was used in the $C$-test \cite{HernandezOrallo-MinayaCollado1998,HernandezOrallo2000a}.  
}

\commentAGI{Both explanations are sufficiently compelling to see whether both can be combined. 
A mixture of the two above approaches could be to modify equation~\ref{eq:Ktinstance} where $\Length$ is taken from the task policy while $\ESteps$ is taken for $\mu_{\sigma}$, i.e.:

\begin{equation}\label{eq:Ktinstance2}
Kt^{[\epsilon, \TAUC  \nuT \mapsto  \nuE]}(\mu_{\sigma}|\mu) \triangleq \min_{\pi \in Opt_{Kt}^{[\TAUC  \nuT \mapsto  \nuE]}(\mu) \cap  \AccSet^{[\TAUC  \epsilon, \nuT \mapsto  \nuE]}(\mu_{\sigma})} {\ELS^{[\TAUC  \nuT \mapsto  \nuE]}(\pi,\mu_{\sigma})} 
\end{equation}

This could be a more elaborate version of difficulty. 
Nonetheless, we must say that any of the above options is not proposed as a definitive policy that may give an intuitive value of difficulty for every possible task and instance. Our goal here is to show some of the ingredients about the notion of difficulty, and provide some useful references to construct one personalised version of instance difficulty for a given situation. Being more or less elaborate, we think that the principles must be the same. For instance, we emphasise that all of them are defined in terms of computational steps, so they actually measure algorithmic effort. 
}

Of course, there will be occasions where the notion of difficulty for an instance is controversial. 
For instance, for a division task, 
imagine 2525/8527, and now consider two different division algorithms $\pi_1$ and $\pi_2$, which both are approximately equally efficient and short (same $\ELS$). However, for $\pi_1$ we have that 7674/2558 is solved very easily but 7674/2558 takes much more steps for $\pi_2$. Using any of the above definitions, the difficulty will depend on the reference machine that calculates length and complexity. Other more robust options, such as considering not only the minimum in equations \ref{eq:argminepsi} and \ref{eq:Ktinstance}\commentAGI{ (or \ref{eq:Ktinstance2})}, have already been mentioned, and would lead to more complicated notions of difficulty. \commentAGI{In any case, they could be thought as future work.}


\commentAGI{
\subsection{Task composition and decomposition}\label{sec:composition} 
}

Now the question is to consider how we can put several tasks together. 
\commentAGI{For instance, if we include $\mu_{num}$ and $\mu_{RPM}$ from Table~\ref{tab:tasks} in the same test, does it make sense to aggregate the results?}  
The first problem is that the aggregation of several responses that are not commensurate makes no sense \commentAGI{(perhaps for one the responses go from 0 to 0.1 while for the other they go from 0.5 to 1, with very different distributions of results\footnote{One can normalise them by a cumulative distribution, again if we can figure out a population or distribution of policies.}).}  
One alternative to a normalisation is to use a tolerance level for the tasks.
 This gives further justification to eq. \ref{eq:acceptN}, where $\AccSet$ was introduced. Given two tolerance levels for each task we can see whether this leads to similar or different difficulties for each task. For instance, if the difficulties are very different, then the task will be dominated by the easy one. \commentAGI{In the previous example, the $\mu_{num}$ is much easier than the $\mu_{RPM}$. By using different tolerance levels we can determine whether we want both tasks to have the same relevance or not. In fact, we do not really need to use different values of $\epsilon$, as we can find a monotonic transformation of (one of) the responses such that the $\epsilon$ can be the same for both, leading to the same difficulty. Given any task, any monotonic transformation of the responses leads to another task such that there is another $\epsilon$ that leads to the same acceptability set.}  

Comparing the difficulties of the tasks for a response value is important to undertand what the composition really means, but we have not defined what a composition is. 
Given two stochastic tasks, it does not make sense to make the union of the tasks, but rather to calculate a mixture. In particular, the composition of tasks $\mu_1$ and $\mu_2$ with weight $\alpha \in [0,1]$, denoted by $\alpha \mu_1 \oplus (1-\alpha) \mu_2$, is defined by a stochastic choice, using a biased coin (e.g., using $\alpha$), between the two tasks. 
Note that this choice is made for each trial. It is easy to see that if both $\mu_1$ and $\mu_2$ are asyncronous-time stochastic tasks, this mixture also is. 

Similar to composition we can talk about decomposition, which is just understood in a straightforward way. Basically, $\mu$ is decomposable into $\mu_1$ and $\mu_2$ if there is an $\alpha$ and two tasks $\mu_1$ and $\mu_2$ such that $\mu = \alpha \mu_1 \oplus (1-\alpha) \mu_2$.


Now, it is interesting to have a short look at what happens with difficulty when two tasks are put together. Given a difficulty function $\Hardness$, we would like to see that if 
$\Hardness(\alpha \mu_1 \oplus (1-\alpha) \mu_2) \approx \alpha  \Hardness(\mu_1) +  (1-\alpha) \Hardness(\mu_2)$ then both tasks are related, and there is a common policy that takes advantage of some similarities. However, in order to make sense of this expression, we need to consider some values of $\alpha$ and fix a tolerance. With high tolerance the above will always be true as $\Hardness$ is close to zero independently of the task. 
With intermediate tolerances, if the difficulties are not even, the policies for the composed task will invest more resources for the easiest `subtask' and will neglect the most difficult `subtask'. For instance, if there is an easy policy for $\mu_1$ achieving response 0.8, but for $\mu_2$ the policies are much more difficult if the same level of response is aimed at, one can make do with a switch, use the easy policy for $\mu_1$ and manage with an easy policy for $\mu_2$ achieving response 0.4. If $\alpha=0.5$ then we would have overall response of 0.6, which may be acceptable for intermediate tolerances. 
Finally, using low tolerances (or even 0) for the above expressions may have more meaning, as the policy must take into account both tasks. In fact, for tolerance 0 the value of $\alpha$ that is not 0 or 1 is not relevant.

In fact, there are some cases for which some relations can be established. Assume 0 tolerance, and imagine that for every $1>\alpha>0$ we have $\Hardness(\alpha \mu_1 \oplus (1-\alpha) \mu_2) \approx \alpha  \Hardness(\mu_1)$. If this is the case, it means that we require the same effort to find a policy for both tasks than for one alone. We can see that task $\mu_1$ {\em covers} task $\mu_2$. In other words, the optimal policy for $\mu_1$ works for $\mu_2$. Note that this does not mean that every policy for $\mu_1$ works for $\mu_2$. 
Finally, if $\mu_1$ covers $\mu_2$ and vice versa, we can say that both tasks are equivalent.

We can also calculate a distance as $d(\mu_1,\mu_2) \triangleq 2\Hardness(0.5\mu_1 \oplus 0.5\mu_2) - \Hardness(\mu_1) - \Hardness(\mu_2)$. Clearly, if $\mu_1 = \mu_2$ then we have 0 distance. For tolerance 0 we also have that if $\mu_2$ has difficulty close to 0 but $\mu_1$ has a high difficulty $h_1$, and both tasks are unrelated but can be distinguished without effort, then we have that the distance is $h_1$. 

\commentAGI{
These ideas and properties can be related to concepts such as (normalised) information distance \cite{bennett1998information,vitanyi2009normalized}, especially the similarity of two tasks, with the appropriate caution, as here we are talking about interactive tasks and we are using the complexity of the policies and not the complexity of the description of the tasks. Two tasks with very similar (shortest) descriptions can have very different policies, and two tasks with very different (shortest) descriptions can have the same general policy. In the case of task composition and distance, we have seen that there are features for tasks that are fundamental, such as their magnitudes (which can be made even by the use of an appropriate tolerance or a monotonic function), their original difficulties and also whether both tasks can be distinguished easily (two similar tasks can be difficult to tell apart and putting them together could require a great extra effort).
}

Nonetheless, there are many questions we can analyse with this conceptualisation. For instance, how far can we decompose? That depends on how we decompose. For an infinite distribution (e.g., many stochastic tasks could be seen in this way), there are infinitely finer decompositions, each of them containing an infinite number of instances. But there are some other decompositions that will lead to tasks with very similar instances or even with just one instance. Let us consider the addition task $\mu_{add}$ with a soft geometrical distribution $p$ on the numbers to be added. With tolerance 0, the optimal policy is given by a short and efficient policy to addition. We can decompose addition into $\mu_{add1}$ and $\mu_{add2}$, where $\mu_{add1}$ contains all the summations $0+x$, and $\mu_{add2}$ incorporates all the rest. Given the distribution $p$, we can find the $\alpha$ such that $\mu_{add} =  \alpha \mu_{add1} \oplus (1-\alpha) \mu_{add2}$. From this decomposition, we see that $\mu_{add2}$ will have the same difficulty, as the removal of summations $0+x$ does not simplify the problem. However, $\mu_{add1}$ is simple now. But, interestingly, $\mu_{add2}$ still covers $\mu_{add1}$. We can figure out many decompositions, such as additions with and without carrying. Also, as the task gives more relevance to short additions because of the geometrical distribution, we may decompose the task in many one-instance tasks and a few general tasks. In the one-instance tasks we would put simple additions such as $1+5$ that we would just rote learn\commentAGI{ (the optimal policy for these cases alone is just rote learn)}. In fact, it is quite likely that in order to improve the efficiency of the general policy for $\mu_{add}$ the policy includes some tricks to treat some particular cases or easy subsets. 
\commentAGI{This can perfectly happen with some of the task difficulty functions seen before, such as $Kt^{[\epsilon, \TAUC  0 \mapsto  1]}(\mu)$. This is also consistent with many cognitive analyses of how humans perform addition (see, e.g., \cite{baroody2013development}). 
The use of decompositions can be useful to analyse many other cases. For instance, if we make a decomposition of $\pi$ into $\pi_1$ and $\pi_2$ with a high $\alpha$, and get that the difficulty of $\pi_1$ is low then it is quite likely that the original policy internally incorporates this separation. This can also happen with difficult instances (or subtasks) if tolerance is 0, as they can be incorporated in a rote-learning way. Also, it may be interesting to compare this (with tolerance 0) to the notion of instance difficulty seen in the previous section (which plays with levels of tolerance).
}


The opposite direction is if we think about how far we can reach by composing tasks. Again, we can compose tasks {\em ad eternum} without reaching more general tasks necessarily. The big question is whether we can analyse abilities with the use of compositions and difficulties. In other words, are there some tasks such that the acceptable policies for these tasks are frequently useful for many other tasks? That could be evaluated by looking what happens to a task $\mu_1$ with a given difficulty $h_1$ if it is composed with any other task $\mu_2$ of some task class. If the difficulty of the composed task remains constant (or increases very slightly), we can say that $\mu_1$ covers $\mu_2$. Are there tasks that cover many other tasks? This is actually what psychometrics and artificial intelligence are trying to unveil. For instance, in psychometrics, we can define a task $\mu_1$ with some selection of arithmetic operations and see that those who perform well on these operations have a good arithmetic ability. In our perspective, we would need to check whether the selection of operators that are evaluated ($+$, $-$, etc.) has some kind of optimal policy that does not help with the general problem. If this does not happen, then we could extrapolate (theoretically and not experimentally) that this task $\mu_1$ covers a range of arithmetic tasks.

\commentAGI{
As more general we get with composition, things will become harder (but not impossible). Can we define a task for inductive ability and show that this will cover every other pure inductive task? Or that it will be helpful for other tasks featuring inductive abilities? In artificial intelligence, this is usually the set of general techniques (in vision, pattern recognition, natural language processing) that are reused again and again in different applications. An ultimate question is whether there is a general task such that it is useful for every other task (like general intelligence or the $g$ factor), especially in cases with many trials (e.g., using $Kt^{[\epsilon, \TAUC  
\mapsto \infty]}(\mu)$). The view of Table~\ref{tab:tasksopt} and some of the optimal policies being $\pi_{find-LS-best}$ suggest that for very `large' tasks, these general, meta-search, algorithms may be good policies for these tasks.  
}

\commentAGI{
All of the above is just some directions that should be analysed in detail with separate research pieces.  
In practice, even if difficulty functions such as $Kt^{[\epsilon, \TAUC  
\mapsto  1]}(\mu)$ may be computationally expensive to calculate for many tasks, these set a conceptual framework to analyse many of these questions. For instance, the notion of task breadth can be analysed in this context.} 
There have been several (informal) approaches or expressions of relevance of task breadth \cite{goertzel2009toward,rohrer2010accelerating} or the notion of intellectual breadth (though applied to an agent \cite{goertzel2010toward} and not to a task). Some of them are relative to other tasks or to humans, such as the one suggested (but not fully developed) with the Turing Ratio \cite{masum2002turing}. 
With the observation of how difficulty changes with composition and decomposition we could try to give a a proper formalisation of task breadth or, alternatively, we may reach the conclusion that task breadth is not a meaningful notion. 

\commentAGI{
Finally, the additivity of difficulty with composition could be analysed, and compared to other kinds of combination. Imagine two tasks $\mu_1$ and $\mu_2$ that are put together (sequentially) as a new joint $\mu$, and an observation signals whether the agent is performing well for each of the two parts. A policy could be \natlangprog{Identify parts. Find the policy for the first part and the second part independently}. That means that the policy could do separate searches. For instance, in the worst case (or using a Levin search) we could have $2^{\Length(\pi_1)} + 2^{\Length(\pi_2)}$ possible choices instead of $2^{\Length(\pi_1)+\Length(\pi_2)}$, where $\pi_1$ and $\pi_2$ are the partial programs that solve the partial subtasks. A concatenation of tasks is very different from a composition, but if agents have memory, we could find cases where there are connections.
}

\comment{
A relevant question is whether the tolerance level and the rest of parameters should be the same for all tasks. 
We can group those of the same difficulty:
\begin{eqnarray}\label{eq:UAccN}
 \Psydiff{h}^{[\epsilon, \TAUC  \nuT \mapsto  \nuE]}(\pi,M,p_M) \triangleq  \sum_{\mu \in M, \Hardness^{[\TAUC  \epsilon, \nu]}(\mu)= h} p_M(\mu|h) \cdot \Acc^{[\TAUC  \epsilon, \nuT \mapsto  \nuE]}(\pi, \mu) 
\end{eqnarray}

It seems that if the tasks are different (such as $\mu_{num}$ and $\mu_{RPM}$), the parameters should be different. 
It makes sense then to define an environment setting as $\left\langle \mu, \epsilon, \TAUC  \nuT \mapsto  \nuE \right\rangle$ and redefine $M$ as the set of all settings. 
So we have:
\begin{eqnarray}\label{eq:UAccN2}
 \Psydiff{h}(\pi,M,p_M) \triangleq  \sum_{\left\langle \epsilon, \TAUC  \nuT \mapsto  \nuE \right\rangle \in M, \Hardness^{[\epsilon, \TAUC  \nuT \mapsto  \nuE]}(\mu)= h} p_M(\left\langle \mu, \nu, \epsilon \right\rangle|h) \cdot \Acc^{[\epsilon, \TAUC  \nuT \mapsto  \nuE]}(\pi, \mu) 
\end{eqnarray}
\noindent Note that whenever the parameters change (especially $\epsilon$) the measure of difficulty changes.

And then aggregating all difficulties:
%
%
\begin{eqnarray}\label{eq:psi2-disc2N}
\Psy_w(\pi,M,p_M) = \sum_{h=0}^{\infty} w(h) \Psydiff{h}(\pi,M,p_M)
\end{eqnarray}
}

\commentAGI{

\subsection{Agent response curves}

One of the usefulness of difficulty is the analysis of agents according to how they behave in terms of the difficulty of the problem. This can be done with the so-called agent response curves, introduced in \cite{upsychometrics2} following the notion of item response curves in psychometrics. Let us see briefly how these curves can be defined for tasks or task instances. 

We first define $\Acc^{[\TAUC  \epsilon, \nuT \mapsto  \nuE]}(\pi, \mu) \triangleq 1 \:\mbox{if}\: \pi \in \AccSet^{[\epsilon, \TAUC  \nuT \mapsto  \nuE]}(\mu) \:\mbox{and 0 otherwise}$. 
A task class is defined as a pair $\left\langle M, p_M \right\rangle$, where $M$ is a set of tasks or task instances, and $p_M$ is a distribution. Note that with this definition, task classes are stochastic tasks (but not all stochastic tasks can be seen as classes). We also consider a difficulty function $h$ (over tasks, or over task instances relative to the overall task). 

We can group those of the same difficulty:
\begin{eqnarray}\label{eq:psydiff}
 \Psydiff{h}^{[\epsilon, \TAUC  \nuT \mapsto  \nuE]}(\pi,M,p_M) \triangleq  \sum_{\mu \in M, \Hardness^{[\TAUC  \epsilon,  \nuT \mapsto  \nuE]}(\mu)= h} p_M(\mu|h) \cdot \Acc^{[\TAUC  \epsilon, \nuT \mapsto  \nuE]}(\pi, \mu) 
\end{eqnarray}

If we represent $\Psydiff{h}$ on the \yaxis versus $h$ on the \xaxis we have a so-called agent response curve, as shown in Figure~\ref{fig:arc}.
In order to have a nice view of the figure, we need to investigate how the points are derived. Do we have elements of $M$ with the same value of $h$? Otherwise, the values of the \yaxis would all be either 0 or 1. In order to observe values between them we must have several elements in $M$ with the same of $h$. If $h$ is a continuous function and this is not the case, we can group $h$ by intervals. This can also be done for convenience if there are no elements for some regions of $h$, so that we get a `continuous' curve without empty regions.

\begin{figure}[ht]
\vspace{-0.7cm}
\centering
\includegraphics[width=0.45\textwidth]{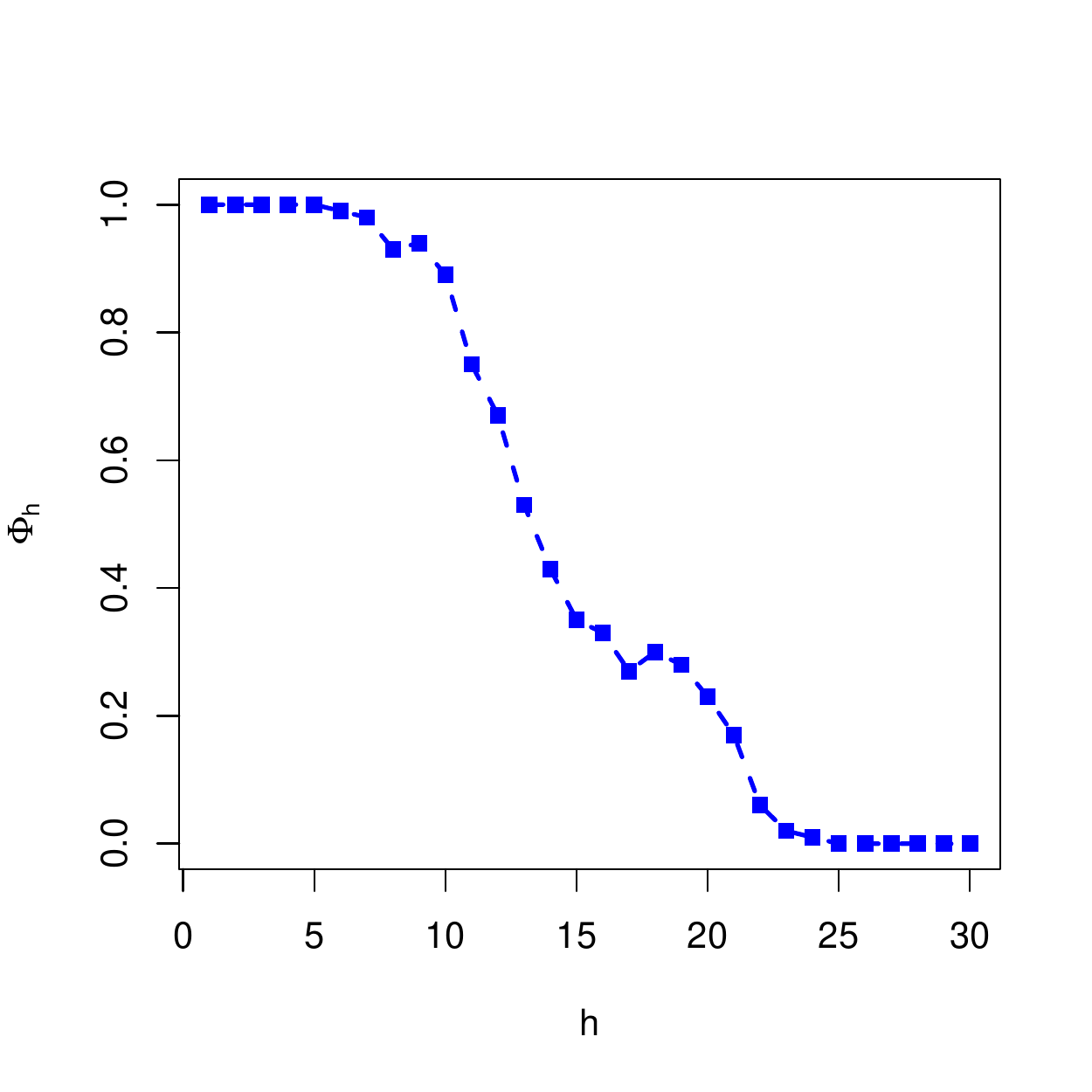} 
\vspace{-0.3cm}
\caption{An agent response curve.}
\label{fig:arc}
\end{figure}

The important thing, however, is how it works. Here we have three particular cases:

\begin{itemize}
\item If the elements of $M$ are stochastic tasks each and we use a non-relative version of difficulty, the curve may have a look very much the same as Figure~\ref{fig:arc}. For some of the difficulty functions seen in the previous sections ($K$ in particular\footnote{For $Kt$ this cannot be proved in general unless we include the computational steps the policy takes into $\Response$. However, this would go against a behavioural evaluation. Nonetheless, for those tasks for which there is some relevance of time to the $\Response$ and assuming non-infinite speed of the agent, we can show that this is bounded.}), it can be shown that for every agent there is an $h$ such that above it, $\Psydiff{h}$ is zero (this $h$ is actually the length of the algorithm of the agent). 
\item If the elements of $M$ are variants of the same task by varying the value of $\epsilon$ then we have that the curve is a non-increasing step function, where the leap of the step is located at the difficulty $h$ of the variant of the task such that the tolerance equals the achievement of agent $\pi$. This curve is of course not very informative for one agent. However, this could be interesting if we aggregate these `step' curves for a population of agents.
\item If the elements of $M$ are instances and we use a relative version of difficulty, assuming that the function is defined for all instances, we may have non-zero values for arbitrary high values of $h$, even for very simple agents (for instance, if an agent only solves a single instance, but this instance is difficult). At least this $h$ can be bounded by the $h$ of the task (if it exists for 0 tolerance). 
\end{itemize}

\noindent For the last case, we can see that if there are many instances with the same difficulty (or we aggregate values of $h$ in intervals), then we are considering an average of results for many instances and the shape will be mostly non-decreasing, like the one shown  in Figure~\ref{fig:arc}.



}

\section{Difficulty as Levin search with stochastic verification}\label{sec:verification} 


We decided to associate difficulty to the {\em smallest number of computational steps such that we get an acceptable policy for a given tolerance $\epsilon$}. 
This depends on how many alternative algorithms we need to try before we find the right one and how much time we require in order to discard the bad ones and confirm the correct one. This boils down to a measure of difficulty that depends on how many options need to be explored and the time that each of them takes. Their product will give an upper bound of the number of computational steps to find the best acceptable policy to a problem, i.e., its difficulty. 

In previous sections we considered the length of the policy and the logarithm of its computational time through their combination $\ELS$, which finally led to the function $Kt^{[\epsilon, \TAUC  \nuT \mapsto  \nuE]}(\mu)$. As we argued, this is given by the realisation that in order to find a policy of length $\Length(\pi)$ we have to try approximately $2^{\Length(\pi)}$ algorithms if we enumerate programs from small to large (this is basically what Levin search does, as we will see below). Considering that we can also gradually increase the computational steps that we devote for each of them, we get $2^{\Length(\pi)} \cdot \ESteps(\pi,\mu)$, whose logarithm is represented by $Kt$. This is why we say that the unit of $Kt$ is logarithm of computational steps\footnote{\cite{mayfield2007minimal} says ``this allows time to be measured in bits", but I think that this is misleading, as there is more information involved.}. 

If we try to extend this notion to tasks, the first, and perhaps most obvious and important difference with traditional Levin's universal search is that tasks are stochastic. Consequently, several trials may be needed for discarding a bad policy and the verification of a good one. This is specially the case when the response can have a high variance. Even a good policy can give bad results eventually, and we cannot discard a good policy just because it fails for one trial. We require repetitions, i.e., more trials, to know whether the policy is good for the whole task on average or not. 
Intuitively, a pair of task and policy with low variability in the response (results) will be easier to be verified than another where results behave more stochastically. \commentAGI{For instance, consider two possible policies: $a$ and $b$. Problem $\mu_1$ gets response 0 or 1 (with equal probability) for policy $a$ whereas it gives a constant value of 0 for policy $b$. This case is more difficult to find and verify than another problem $\mu_2$ where policy $a$ consistently gets a constant response of $0.5$. Note that in both cases, the expected response for policies $a$ and $b$ is 0.5 and 0 respectively, but it is intuitive to think that the first case is more difficult to find (actually because it is harder to verify).} 

\commentAGI{The second difference with classical Levin search is that the search algorithm goes through several trials, and it is not clear that the agent can interrupt the trial if a policy does not look promising. Nonetheless, we can consider that the search algorithm can also do some {\sf sleep} operations so that basically nothing is done until a new algorithm can be tried for the following trials.} 

The third difference is that we can think about a Levin search with memory, as some of the observations on previous trials may be crucial (whereas Levin search is basically a blind search). So we need that the policies that are tried could also be search procedures over several trials. That means that Levin search actually becomes  
 a metasearch, which considers all possible search procedures, ordered by size and resources\footnote{There are some variants and adaptations of Levin search for interactive scenarios and MDPs \cite{hutter2002fastest,schmidhuber2004optimal,Hutter05,schmidhuber2007godel,schaul2010metalearning}. 
Here it is not our goal to find a search that is useful to design intelligent agents but to find some expressions that help us refine our definition of {\em task} difficulty.}. \commentAGI{Only in this way we can properly give an intuitive measure of difficulty for $\mu_{srote}$ and  $\mu_{lrote}$ in Table~\ref{tab:tasksopt}.} 

\commentAGI{
This is just a realisation that for interactive stochastic scenarios, verification is not just one execution, 
%
but many if things are stochastic, because there is noise, the systems are not foolproof, etc. In a way, we are looking for more general and robust searches. This view is not very different to many evolutionary processes that have tried many policies in a world that is basically stochastic. 
} 


Another important thing is that in order to calculate the computational steps of a search, this search must stop at some point and say that the good policy has been found. However, as tasks are stochastic, we can never have complete certainty that a good policy has been found. 
An option is to consider a confidence level, such that the search invests as fewer computational steps as possible to have a degree of confidence $1-\delta$ of having found an $\epsilon$-acceptable policy. This clearly resembles a PAC (probably approximate correct) scenario \cite{valiant1984theory\commentAGI{,valiant2013probably}}.



\commentAGI{
Before starting, Table~\ref{tab:difftypes} summarises some of the notations we will use. We must also bear in mind that we are focussing on a view of difficulty when the policy is found by search (be it ``intellectual'', ``evolutionary'' or ``cultural'', as Turing distinguished \cite{turing1948intelligent}). However, the table also shows that there are other ways of acquiring a policy (by transmission, by demonstration or by search).

\begin{table}
{
\begin{center}
\begin{tabular}{cccc}
 Feature                             & Kind of difficulty it represents            & Notation                   & Depends on           \\ \hline
 Information content (size) of $\pi$ & Transmission (language or coding)     & $\Length(\pi)$             & -                    \\ 
 Execution steps of $\pi$            & Demonstration                         & $\ESteps(\pi,\mu)$          & -                    \\
 Expected value of response of $\pi$ & -                                                 & $\EResponse(\pi,\mu)$ & -                    \\
 True variance of response of $\pi$  & -                                                 & $\Var{\Response(\pi,\mu)}$ & -                    \\
 Verification trials of $\pi$        & -        & $\EBids(\pi,\mu)$           & $\EResponse$ and $\Var{\Response(\pi,\mu)}$    \\
 Finding effort steps of $\pi$              & Finding (trivial or no verification)                               & $\ELS(\pi,\mu)$             & $\Length$ and $\ESteps$ \\
 Verification steps of $\pi$         & -                                                 & $\EVerification(\pi,\mu)$   & $\ESteps$ and $\EBids$ \\
 Total effort steps of $\pi$     & Search (with target)               & $\EEffort(\pi,\mu)$         & $\Length$ and $\EVerification$ \\ \hline
\end{tabular}
\caption{Different features of a policy $\pi$ given a task $\mu$.}\label{tab:difftypes}
\end{center}
}
\end{table}

} 




\commentAGI{
\subsection{Levin universal search for stochastic tasks and/or policies}
}


Levin's universal search has very interesting properties, as any inversion problem can be solved optimally (except for a multiplicative constant) \cite[pp. 577--580]{Li-Vitanyi08}. It is related (and with approximately similar properties) to the SIMPLE search algorithm in \cite[pp. 579]{Li-Vitanyi08}, but with the advantage that the execution of programs does not need threads or traces to be kept in order to resume previously explored program executions (at the cost of repeating part of previous executions). \commentAGI{The important thing is how they relate the length of a program with their execution (and verification) time.}

\commentAGI{

The traditional Levin's universal search is defined as follows:


\begin{definition} Levin's universal search. \label{def:levinsearch1}
Given a string $x$ and a universal prefix-free Turing machine $U$, for which programs can be enumerated, we conduct several phases, starting from phase $1$.
For phase $i$, we execute all possible programs $p$ with $\Length(p) \leq  i$ for at most $s_i = 2^{i-\Length(p)}$ steps each, {\em including in this limit $s_i$ the steps\footnote{In order to verify a string we need to compare bit by bit with $x$. Note that this is not going to be constant. In the worst case, this takes $c \cdot \Length(x)$ steps, with $c$ being the computational steps per bit verification of a program that goes bit by bit over $x$. However, on average (assuming a 0.5 probability that a random program guesses each bit right), we have that the expected value is $\sum_{i=1}^{\Length(x)} i2^{-i}$, which converges to 2, so we will have $c \cdot 2$ steps on average. This is the reason why this verification part is often ignored for identification problems.} needed to verify whether $U(p) = x$.} 
 Once we find and verify the first successful policy the search is stopped. Otherwise we continue until we complete the phase and then to a next stage $i+1$.
\end{definition}


The number of steps to execute $p$ and verify that it produces $x$ or not\footnote{In this case, at the first moment that the string produced by $p$ does not match $x$ the verification is stopped.} is denoted by $\Verification(p,x)$.
We can determine an upper bound of the total number of steps taken by this procedure.  
While one could expect that this is $s\leq (k-1)2^{k+1} + 2$, this is significantly reduced by the use of prefix-free programs, so Kraft inequality can be used, having:

\comment{
\begin{theorem}
The number of steps $s$ taken by Levin search given by definition \ref{def:levinsearch1} is bounded by:
\[ s\leq (k-1)2^{k+1} + 2\]
where 
\begin{equation}
k = \Length(p) + \left\lceil log(\Steps(p))\right\rceil \label{eq:k}
\end{equation}
 and $p$ is the first program that meets the stop condition.
\end{theorem}
\begin{proof}
Let us see first that program $p$ is found in phase $k$. As it has size $\Length(p)$ amd requires $\Steps(p)$ steps, this is first achieved for this size when $\Steps(p) \leq 2^{i-\Length(p)}$. If we isolate $i$ from here we get
$i \geq \Length(p) + log(\Steps(p))$, which corresponds to the first natural number $k \geq i$, i.e., equation \ref{eq:k}.

For each phase $i$ we explore all possible programs $p$ with $\Length(p) \leq  i$, i.e., programs of length $j=1..i$ for at most $2^{i-j}$ steps. If all programs exhausted their step limit, that would make a total of steps per phase as follows:
\[ s_i = \sum_{j=1}^i 2^j \cdot \blue{\{} 2^{i-j} \blue{+ 2c \}}= \sum_{j=1}^i 2^i \blue{+ \sum_{j=1}^i 2^j 2c \}} = i2^i \blue{+ 2c (2^{i+1} - 2) \}}  \]
And the total number of steps is:
\[ \sum_{i=1}^k s_i = \sum_{i=1}^k i2^i \blue{+ \sum_{i=1}^k 2c (2^{i+1} - 2) } = (k-1)2^{k+1} + 2 \blue{+ 2c (2^{k+2} - 2k - 4)} \]
\end{proof}
}


\begin{theorem}\label{theo:nsteps}(\cite[pp. 580, claim 7.5.1]{Li-Vitanyi08})
The number of steps $s$ taken by Levin search given by definition \ref{def:levinsearch1} is bounded by:
\[ s\leq 2^{k+1} \]
where 
\begin{equation}
k = \Length(p) + log(\Verification(p,x)) \label{eq:k}
\end{equation}
 and $p$ is the first program that meets the stop condition.
\end{theorem}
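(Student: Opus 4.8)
The plan is to proceed in three steps: first locate the phase in which the successful program $p$ is discovered, then bound the total work expended within any single phase, and finally sum the per-phase bounds over all phases up to the one where $p$ halts successfully.

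First I would identify the critical phase. By the definition of the search, in phase $i$ every program of length at most $i$ is run for at most $2^{i - \Length(p)}$ steps, and this budget already subsumes the verification steps. Since $p$ needs $\Verification(p,x)$ steps to execute and verify, it first succeeds once its budget reaches $\Verification(p,x)$, i.e.\ once $2^{i - \Length(p)} \geq \Verification(p,x)$. Isolating $i$ shows this occurs at the phase $k = \Length(p) + \log \Verification(p,x)$ (taken as the first integer phase for which the inequality holds), which is exactly equation~\ref{eq:k}. Hence the search terminates no later than the end of phase $k$.

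Next I would bound the work in a fixed phase $i$ by summing the per-program step budgets over all programs of length at most $i$, giving $\sum_{p' : \Length(p') \leq i} 2^{i - \Length(p')} = 2^i \sum_{p' : \Length(p') \leq i} 2^{-\Length(p')}$. Here the prefix-free structure of $U$ is essential: Kraft's inequality gives $\sum_{p'} 2^{-\Length(p')} \leq 1$, so the total work in phase $i$ is at most $2^i$. Summing over all phases then yields $\sum_{i=1}^{k} 2^i = 2^{k+1} - 2 \leq 2^{k+1}$, the claimed bound.

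The main obstacle, and the entire reason the bound is as tight as $2^{k+1}$, is the application of Kraft's inequality. Without the prefix-free assumption one would count $2^{\ell}$ programs of each length $\ell$ and obtain $\sum_{\ell \leq i} 2^{\ell} \cdot 2^{i-\ell} = i \cdot 2^i$ steps per phase, leading to the much weaker overall bound $\sum_{i=1}^{k} i \cdot 2^i = (k-1)2^{k+1} + 2$ recorded in the commented alternative. The sharpening to $2^{k+1}$ rests entirely on replacing this crude count by length with the Kraft bound for the prefix-free machine; the care therefore lies in confirming that the program enumeration genuinely respects the prefix-free property and that the verification cost is honestly folded into the step budget $s_i$ rather than charged separately.
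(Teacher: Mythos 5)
Your argument is correct and coincides with the paper's own (commented-out) derivation: locate the first phase $k$ at which $2^{k-\Length(p)} \geq \Verification(p,x)$, bound the work in phase $i$ by $2^i\sum_{p'}2^{-\Length(p')}\leq 2^i$ via Kraft's inequality for the prefix-free machine, and sum the geometric series to get $2^{k+1}-2\leq 2^{k+1}$. Your closing remark contrasting this with the crude $(k-1)2^{k+1}+2$ bound obtained without prefix-freeness is exactly the point the paper makes in the surrounding text.
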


\comment{
\red{
\[ \sum_{i:1 \leq i \leq k} \: \sum_{p:0 < i -l(p)} 2^{i-l(p)} = \sum_{i:1 \leq i \leq k} 2^i \sum_{p:0 < i -l(p)} 2^{-l(p)} \leq
\sum_{1 \leq i \leq k} 2^i \sum_{U(p) < \infty} 2^{-l(p)} \leq \sum_{U(p) < \infty} 2^{-l(p)} \sum_{1 \leq i \leq k} 2^i 
 \leq 2^{k+1} \]}
}

Even if we can use Kraft inequality and we get a much tighter upper bound, it seems that this bound is still rather loose, as many programs may stop before the alloted stops. 
However, as we can think of UTMs for which all programs of size lower than a constant may have the properties that we would like, this upper bound cannot be made lower in general (although some systems can exploit it for some other UTMs).


The use of $k$ as for equation \ref{eq:k} in theorem \ref{theo:nsteps} suggests that we use this expression as a standalone expression:
\begin{equation}
\log \Effort(p,x) \triangleq \log 2^{\Length(p)} \cdot \Verification(p,x) = \Length(p)+\log(\Verification(p,x))  \label{eq:effort}
\end{equation}
As we are considering non-probabilistic programs and an identification problem (and not really an inversion problem for any given partial recursive function), 
we do not need parameter $x$ here.
\[ \log \Effort(p) \triangleq \log 2^{\Length(p)} \cdot \Verification(p,U(p)) = \Length(p)+\log(\Verification(p,U(p)))  \]

According to the above process, it is easy to see that using the above procedure the first returned program that outputs $x$ will be  one that minimises:
$Kt(x) \triangleq \min_{p : U(p)=x} \log \Effort(p)$.

Levin's search assumes that there is a fast way of verifying policies.  
Now in the case of interactive stochastic systems with a response function, the procedure cannot just verify that the policy is correct by executing it once. Also, for each execution of the same program the number of steps can be different. How can we adapt universal search to this situation?

\begin{definition}\label{def:levinstochastic1} Levin's universal search for stochastic tasks and policies. 
Given a problem $x$ and a universal prefix-free Turing machine $U$, for which policies to $x$ can be enumerated, we conduct several phases, starting from phase $1$.
For phase $i$, we execute all possible programs $p$ with $\Length(p) \leq  i$ for at most $s_i = 2^{i-\Length(p)}$ steps each. {\em In these steps we include the steps required to execute $p$ several times to consider that $p$ is a policy for $x$} (within the alloted number of steps). As soon as the policy is deemed to be incorrect or the alloted number of steps is exhausted, we try the next program. On the contrary, if the policy is verified the search is stopped. 
While a policy is not found we continue until we complete the phase and then to a next stage $i+1$.
\end{definition}

The number of verification steps now\footnote{Actually, the number of verification steps was also an expected value, as depends on the differences between the reference string and the output of the program.} depend on stochastic executions and may vary (that is why we denote them by $\EVerification$). 
%
%
And similarly, we get the equation for effort equal to eq. \ref{eq:effort}. 
In this case, we cannot get rid of $x$ in the definition, as $p$ may be stochastic and $\EVerification$ is understood as an expected value.

\subsection{`PAC' verification for stochastic tasks}\label{sec:PACtarget} 


Now we are going to adapt this to stochastic tasks. We again realise that for a stochastic system we can never be 100\% sure of a policy, because even after a million successes we can have a failure. The second thing is that for stochastic systems it may be unreasonable to expect maximum or perfect result. In fact, by using any statistical test about whether we have reached the maximum value, we cannot have any degree of certainty (even the slimmest) of having this maximum value, as our average so far will never be above the maximum value, which is necessary for statistical significance.

We could think about using {\em one} slack parameter. Given a series of $n$ runs, we can calculate the average $\widehat{r}$ and the standard deviation $\sigma$ of the results. For instance, we can get the standard error by just $\mbox{SE} = \frac{\sigma}{\sqrt{n}}$ and set a limit on it. However, if we do this, we see that it depends on the magnitude. For instance, a stochastic process alternating between 0 and 1 will have higher $\sigma$ than if it alternates between 0.4 and 0.6, just by scaling, even if the verification cost (of knowing whether it is above, e.g., 0.7, or not) looks the same. 


Instead, we are going to consider two parameters. 
} 
 We want the search procedure to find a policy with a confidence level $\delta$, i.e., $Pr(\pi \: \mbox{solves} \: \mu) \geq 1 - \delta$. As mentioned above, if we consider the best possible result (i.e., 1) to acknowledge that this is solved, then even with high values of $\delta$ this will never be achieved. So the second thing is that if we consider a utility, response or result function $\Response$, we must set that the difference with respect to the best policy is lower than a given error $\epsilon$. 
If we denote the best possible average result (for an infinite number of runs) as $r^*$ (note that $r^*$ can be lower than 1), we consider that a series of runs is a sufficient verification for a probably approximate correct (PAC) policy $\pi$ for $\mu$ when:
%
\begin{equation}
Pr(r^* - \widehat{r} \leq \epsilon) \geq 1 - \delta \label{eq:pac}
\end{equation}
\noindent with $\widehat{r}$ being the average of the results of the trials (runs) so far. 
\commentAGI{As $r^* - \widehat{r} \leq \epsilon$ is the same as $\widehat{r} \geq r^* - \epsilon$, sometimes $r^* - \epsilon$ will be referred to as the `threshold' or `target'. For instance, if the achievable maximum is 0.9 and $\epsilon = 0.15$ then our threshold is 0.75.}



\commentAGI{
Now we are ready to give an expression for the verification steps for a given problem $\mu$ and a policy $\pi$. Namely, 
the number of verification steps $\EVerification^{[\TAUC\epsilon, \delta]}(\pi, \mu)$ is defined as the expected value of the parameter $s$ returned by \textsc{VerifyGen} (Algorithm~\ref{alg:algo1}) for $s_{max} = \infty$.


\begin{algorithm}
\caption{\textsc{Verification algorithm (generic)}} \label{alg:algo1}
\
\begin{algorithmic}[1]
\Function{VerifyGen}{$\pi$, $\mu$, $\TAUC$ $\epsilon$, $\delta$, $s_{max}$}  \Comment{$s_{max}$ is the number of allowed steps}
\State $j \leftarrow 1$
\State $s \leftarrow 0$
\State $m_\pi \leftarrow \emptyset$                             \Comment{The algorithm $\pi$ can keep memory between trials. Initially empty.}
\Repeat
\State $\left\langle r_j,s_j,m_\pi \right\rangle \leftarrow \Run(\pi, m_\pi, \mu, \TAUC  s_{max}-s)$  \Comment{One trial with at most the $s_{max}-s$ remaining steps}
\State \Comment{$\Run$ returns response and used steps}
\State $s \leftarrow s + s_j$                                            \Comment{Accumulate steps}
\State $r \leftarrow r + r_j$                                            \Comment{Accumulate response}
\State $\widehat{r} \leftarrow \frac{r}{j}$                                  \Comment{Average response}
\State $p \leftarrow Pr(r^* - \widehat{r} \leq \epsilon)$                    \Comment{We calculate this probability in some way}
\If{$p \geq 1 - \delta$} \Return $\left\langle \mbox{\sf TRUE},s \right\rangle$ 
\Comment{Stop because it is verified}
\ElsIf{$p \leq \delta$} \Return $\left\langle \mbox{\sf FALSE},s \right\rangle$ \Comment{Stop because it is rejected}
\EndIf 
\State $j \leftarrow j + 1$
\Until{$s \geq s_{max}$}
\State \Return $\left\langle \mbox{\sf FALSE},s \right\rangle$
\EndFunction
\end{algorithmic}
\end{algorithm}

Note that we require $r^*$, which is defined as the highest expected response of any resource-bounded policy (in $\ELS$). If this is not known, we can assume $r^*=1$, as in previous sections.

Algorithm~\ref{alg:algo1}, if using an appropriate estimation of the probability for stopping in each iteration, may have a tendency of stopping prematurely because each iteration depends on the previous ones. Actually, especially in the beginning, this is vulnerable to spurious results and very bad estimations of the mean and the variance of the response. 
This is basically the problem of (large-scale) multiple testing. 
One strong correction is Bonferroni method, where the confidence per test is modified to: $\delta' = \delta / n$, but as $n$ are the repetitions so far, it would be an incremental test. 
\comment{On the other extreme, we have people that says that no adjustments are needed (Rothman 1990): ``scientists should not be so reluctant to explore leads that may turn out to be wrong that they penalize themselves for missing possibly important''.  
But others criticise this view: Bender, and Lange 1999 ``Adjusting for multiple testing: when and how''. 
}

Also, if used with definition \ref{def:levinstochastic1}, this problem is exacerbated. As we evaluate about $2^i$ programs in each phase (actually slightly less than this because it is a prefix code), we have that there can be cases that are just accepted (the condition $Pr(r^* - \widehat{r} \leq \epsilon) \geq 1 - \delta$ becomes true) by chance. Note that a rejection by chance is not so problematic, as the same program will be evaluated in the following phase again. 


Given all the considerations above, we realise that it is going to be very difficult to find the exact statistical criteria to stop by acceptance and rejection. In what follows, we just propose an approximation to the upper limit with the goal of recognising the difference in difficulty between finding an acceptable policy for stochastic problems with high margin ($\widehat{r} + \epsilon - r^*$) and small standard deviation and those with tighter margins and higher standard deviations.

} 

First, we are going to assume that all runs take the same number of steps (a strong assumption, but let us remind that this is an upper limit\commentAGI{\footnote{For instance, if $\epsilon =0.5$ and all bad policies have response 0.499999 and there is only one good policy with response 0.55, we will require many repetitions to discard the bad policies, until we find and verify the good policy.}}), so the verification cost \commentAGI{above} could be approximated by 
\begin{equation}
\widehat{\EVerification}^{[\TAUC\epsilon, \delta]}(\pi, \mu) \triangleq \ESteps(\TAUC \pi,\mu) \cdot \EBids^{[\TAUC\epsilon, \delta]}(\pi, \mu) \label{eq:VerificationApprox}
\end{equation}
i.e., the product of the expected number of steps times the expected number of verification bids\commentAGI{ (iterations needed of the loop of Algorithm~\ref{alg:algo1})}. 
With this, we focus on calculating the number of bids of the policy until we verify it is a acceptable or not.

The number of bids can be estimated if we have the mean and the standard deviation of the response for a series of runs. 
\commentAGI{If the conditions of the central limit theorem held, we could consider that the results of the bids would be normally distributed. In our case, the trials are not independent (neither are they ergodic) if we consider that the algorithm has memory between the trials, but nevertheless we will make this assumption, as, in general, we cannot make any further assumption about the distribution of the responses. As a result we can use the confidence level given by the normal distribution.
The confidence interval is given by $\widehat{r} - \frac{|z_{\delta/2}|\sigma}{\sqrt{n}},  \widehat{r} + \frac{|z_{\delta/2}|\sigma}{\sqrt{n}}$.
Where $z_{\delta/2}$ is the standard normal quantile. For instance, for $\delta=0.05$, we have $|z_{0.025}|=1.96$. 
We want this interval width $w$ to be at most twice the margin over the threshold $r^* - \epsilon - \widehat{r}$. 
So, $w \leq 2(\widehat{r} + \epsilon - r^*)$. As $w = 2\frac{|z_{\delta/2}|\sigma}{\sqrt{n}}$, we have:
$2\frac{|z_{\delta/2}|\sigma}{\sqrt{n}} \leq 2(\widehat{r} + \epsilon - r^*)$.
By isolating $n$ we have:}
\commentARXIV{Assuming a normal distribution:} 

\begin{equation}\label{eq:znorm}
n \geq \frac{|z_{\delta/2}|^2\sigma^2}{(\widehat{r} + \epsilon - r^*)^2}
\end{equation}
  
\commentAGI{
Note that the above formula is infinite when $r^* - \epsilon = \widehat{r}$, i.e., when we have that the policy reaches the threshold exactly. We cannot verify it is above the threshold for any confidence level.}   
%
%

In order to apply the above expression we need the variance $\sigma^2$. If we just have one run, this is undefined, and for very few runs this is going to be poorly estimated. 
Many approaches to the estimation of a population mean with unknown $\sigma^2$ are based on a pilot or prior study (let us say we try 30 repetitions) and then derive $n$ using the normal distribution and then use this for a Student's t distribution. 
Instead of this, we are going to take an iterative approach where we update the mean and standard deviation after each repetition. The problem, of course, happens with the first iterations. One approach we will take is to consider the maximum standard deviation as a start (as a kind of Laplace correction). As we assume that the response $\Response$ is between 0 and 1, we will consider\footnote{Other options exist, such as deriving some initial values depending on the threshold.} 
 two fabricated repetitions with responses 0 and 1. 
\commentAGI{With this, our start sample standard deviation will be high from the beginning and a minimum of iterations will always take place.}


\commentAGI{Algorithm~\ref{alg:algo2} is a modification of Algorithm~\ref{alg:algo1} where we use eq.~\ref{eq:znorm}.} 
\commentARXIV{Algorithm~\ref{alg:algo2} is used in a modified Levin search:}

\begin{algorithm}
\caption{\textsc{Verification algorithm (normality)}} \label{alg:algo2}
\
\begin{algorithmic}[1]
\Function{VerifyNorm}{$\pi$, $\mu$,$\TAUC$$\epsilon$, $\delta$, $s_{max}$}  \Comment{$s_{max}$ is the number of allowed steps}
\State $j \leftarrow 3$                                           \Comment{We consider two first response with high variance}
\State $r \leftarrow 0+1$                                         \Comment{One with value 0 and the other with value 1} 
\State $s \leftarrow 0$
\State $m_\pi \leftarrow \emptyset$                             \Comment{The algorithm $\pi$ can keep memory between trials. Initially empty.}
\Repeat
\State $\left\langle r_j,s_j,m_\pi \right\rangle \leftarrow \Run(\pi, m_\pi, \mu, \TAUC  s_{max}-s)$  \Comment{One trial with at most}
\State \Comment{the $s_{max}-s$ remaining steps $\Run$ returns response and used steps}
\State $s \leftarrow s + s_j$                                            \Comment{Accumulate steps}
\State $r \leftarrow r + r_j$                                            \Comment{Accumulate response}
\State $\widehat{r} \leftarrow \frac{r}{j}$                                  \Comment{Average response}
\State $\widehat{\sigma}^2 \leftarrow \Var{r_1 \dots r_j}$                                  \Comment{Variance estimation}
\State $n_0 \leftarrow \frac{|z_{\delta/2}|^2\widehat{\sigma}^2}{(\widehat{r} + \epsilon - r^*)^2}$
\If{$j \geq n_0$} \If{$\widehat{r} > r^* - \epsilon$} \Return $\left\langle \mbox{\sf TRUE},s \right\rangle$ 
\Comment{Stop because it is verified}
\Else $\:$ \Return $\left\langle \mbox{\sf FALSE},s \right\rangle$ \Comment{Stop because it is rejected}
\EndIf   
\EndIf
\State $j \leftarrow j + 1$
\Until{$s \geq s_{max}$}
\State \Return $\left\langle \mbox{\sf FALSE},s \right\rangle$
\EndFunction
\end{algorithmic}
\end{algorithm}

\commentAGI{Finally, we modify definition \ref{def:levinstochastic1} by considering that when we find a verified policy we repeat the verification again with some extra repetitions (for instance, $n=30$, so that the used normal distribution is a more sustainable assumption). Note that this extra verification will be performed just very occasionally, so this will not significantly affect the number of steps taken by the modified Levin search. With all this, the modified version is as follows:}

\begin{definition}\label{def:levinstochastic2} Levin's universal search for stochastic tasks and policies with given tolerance $\epsilon$,  confidence level $1-\delta$, and maximum response reference $r^*$. 
Given a task $\mu$ 
for which policies can be enumerated. 
We conduct several phases, starting from phase $1$.
For phase $i$, we execute all possible policies $\pi$ with $\Length(\pi) \leq  i$ for $s_{i} = 2^{i-\Length(\pi)}$ steps each. 
We call function \textsc{VerifyNorm}$(\pi, \mu, \TAUC \epsilon, \delta, s_{max})$ 
in Algorithm~\ref{alg:algo2} with $s_{max} = s_i$. 
While an acceptable policy is not found we continue until we complete the phase and then to a next stage $i+1$. 
If an acceptable policy is found, some extra trials are performed before stopping the search for confirmation.
\end{definition}

\begin{theorem}
For every $\mu$ and $\epsilon, \delta > 0$, if a maximum $r^*$ exists achievable by a computable policy\footnote{It could not exist if there is a never-ending series of programs requiring, e.g., more time to get a slightly better policy. It exists if there is a limit of steps (not time) with the interaction with the environment.} 
and it is given, then definition \ref{def:levinstochastic2} conducts a finite search.
\end{theorem}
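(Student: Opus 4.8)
The plan is to exhibit one computable policy that the search is bound to locate and accept within a finite phase, and to bound the whole procedure by it. By hypothesis the value $r^*$ is attained by a computable policy $\pi^*$, and by the assumption in the accompanying footnote this policy interacts with $\mu$ using a bounded number of steps per trial, so $S^* \triangleq \ESteps(\pi^*,\mu) < \infty$; it also has a fixed finite length $\ell^* \triangleq \Length(\pi^*)$. We may take $\pi^*$ to produce independent per-trial responses of mean $r^*$ (encoding the optimal behaviour directly, with no inter-trial learning), so that its empirical average $\widehat{r}$ concentrates on $r^*$. Since $\epsilon>0$, the true mean $r^*$ clears the target $r^*-\epsilon$ with a strictly positive margin $\epsilon$.

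First I would bound the number of verification bids that \textsc{VerifyNorm} needs on $\pi^*$. Every response lies in $[0,1]$, so the estimated variance obeys $\widehat{\sigma}^2 \leq 1/4$; and because the true mean is $r^*$, a concentration bound makes the empirical margin $\widehat{r}+\epsilon-r^*$ exceed $\epsilon/2$ after a bounded number of trials with probability bounded below by some $c>0$. Feeding these two facts into eq.~\ref{eq:znorm} bounds the stopping index by the constant $N \triangleq |z_{\delta/2}|^2/\epsilon^2$, independent of the phase, and the same concentration forces $\widehat{r}>r^*-\epsilon$ at the decision time; hence, conditional on a sufficient step budget, \textsc{VerifyNorm} returns \textsf{TRUE} on $\pi^*$ (and passes the subsequent confirmation trials) with probability at least $c$. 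Running $\pi^*$ for these $N$ trials costs at most $N S^*$ steps.

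Next I would choose the first phase $I$ with $I \geq \ell^*$ and $s_I = 2^{I-\ell^*} \geq N S^*$; such an $I$ is finite, e.g.\ any $I \geq \ell^* + \lceil \log_2(N S^*)\rceil$. From phase $I$ onward the budget $s_i$ allotted to $\pi^*$ suffices for the $N$ trials above, and because each phase re-executes $\pi^*$ from scratch with fresh randomness, the acceptances across phases are independent, each occurring with probability at least $c>0$. Therefore the probability that the search is still running after phase $I+k$ is at most $(1-c)^k$, which tends to $0$, so the search halts after finitely many phases almost surely and in finite expectation (a geometric number of phases past $I$). As every phase runs finitely many programs for finitely many steps, the total step count is almost surely finite, which is the claim. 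Note that any spurious early acceptance of another policy only makes the search stop sooner, so $\pi^*$ genuinely upper-bounds the halting time.

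The step I expect to be the main obstacle is controlling the stopping index $n_0$ of eq.~\ref{eq:znorm}, which is itself random since it depends on the running estimates $\widehat{r}$ and $\widehat{\sigma}^2$ and can momentarily blow up whenever the empirical margin is small. The clean route is to reason only about $\pi^*$, whose true mean sits strictly above the target: a single concentration argument (the normal approximation behind eq.~\ref{eq:znorm}, or a distribution-free Hoeffding bound if one prefers not to assume normality) simultaneously keeps $n_0$ below $N$ and certifies $\widehat{r}>r^*-\epsilon$ at the decision time with constant probability. A secondary point worth stating explicitly is that ``finite'' must here mean almost surely finite, equivalently finite in expectation, rather than deterministically bounded, since an unbroken run of unlucky rejections of $\pi^*$, though of vanishing probability, is never strictly impossible; the geometric-tail estimate is precisely what upgrades the per-phase lower bound $c$ into almost-sure termination.
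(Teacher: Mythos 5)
Your proposal is correct and follows essentially the same route as the paper's own proof: exhibit the resource-bounded optimal policy $\pi^*$, bound the number of verification bids via $\sigma^2\leq 1/4$ in eq.~\ref{eq:znorm}, and identify the finite phase at which the step budget $2^{i-\Length(\pi^*)}$ suffices for that many trials. Your version is in fact more careful than the paper's one-line argument on the point it glosses over --- that acceptance of $\pi^*$ in a given phase is only an event of positive probability, so ``finite search'' must be read as almost-surely finite, which your geometric-tail estimate across independent phases supplies.
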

\begin{proof}
As $r^*$ is defined as the highest expected response for a resource-bounded policy $\pi^*$ (in $\ELength$) and it exists, then there is a number of phases where $\pi^*$ has already been found and there are enough steps such that $\widehat{r}$ is becoming as closer to $r^*$ as needed such that $\widehat{r} + \epsilon - r^*$ is positive and sufficiently close to $\epsilon$ such that is verified $Pr(r^* - \widehat{r} \leq \epsilon) \geq 1 - \delta$. Note that as results are bounded between 0 and 1 the highest variability is $\sigma^2 = 1/4$, so we have that $n \sim \frac{|z_{\delta/2}|^2\sigma^2}{(\epsilon)^2}$ is bounded.
\end{proof}

\commentAGI{
Note that if instead of $r^*$ we give a higher value that, subtracted the error tolerance, cannot be attained, then the search is not bounded. Also note that in any case there can be a very simple policy equal to $r^* - \epsilon$ and will never be found.

Definition \ref{def:levinstochastic2} is conceived to find the optimal policy, and it is not parametrised to calculate how long the search is to {\em discard} non-optimal policies. 
Actually, what we do is to use the approximation (i.e., equation \ref{eq:VerificationApprox}) into another approximation for any possible $\pi$, assuming that $\pi$ were the best policy.
}

In the end, what we want is to have a term that accounts for the variability of computational steps given by the variance of the response and its proximity to the threshold, as both things make verification more difficult. This is finally calculated as:
\begin{equation}\label{eq:ebids}
\EBids^{[\TAUC\epsilon, \delta]}(\pi, \mu) \triangleq \frac{|z_{\delta/2}|^2 {\Var{\Response(\pi,\mu)}}}{(\EResponse(\pi, \mu) + \epsilon - r^*)^2}
\end{equation}
\noindent For both $\Var{\Response(\pi,\mu)}$ and $\EResponse(\pi, \mu)$ we consider that we include two extra responses as a start, as done in Algorithm~\ref{alg:algo2}.

And now the effort \commentAGI{(eq. \ref{eq:effort})} is rewritten as:
\begin{equation}
\log \EEffort^{[\TAUC\epsilon, \delta]}(\pi, \mu) \triangleq \log (2^{\Length(\pi)} \cdot \widehat{\EVerification}^{[\TAUC\epsilon, \delta]}(\pi, \mu)) = \Length(\pi) + \log \widehat{\EVerification}^{[\TAUC\epsilon, \delta]}(\pi, \mu) 
\end{equation}
\noindent For clarity, we can expand what $\EEffort$ is by using the definition $\widehat{\EVerification}$ from eq.~\ref{eq:VerificationApprox} and taking the bids from eq.~\ref{eq:ebids} as:
\begin{equation}\label{eq:finaleffort}
\log \EEffort^{[\TAUC\epsilon, \delta]}(\pi, \mu) = \Length(\pi) + \log \ESteps(\TAUC \pi,\mu) \cdot \EBids^{[\TAUC\epsilon, \delta]}(\pi, \mu) 
=  \Length(\pi) +  \log \ESteps(\TAUC \pi,\mu) + \log \EBids^{[\TAUC\epsilon, \delta]}(\pi, \mu) 
\end{equation}
%
\noindent\commentAGI{It is a good question to determine whether $\ESteps$ and $\EBids$ have comparable magnitudes. If the policies take thousands of steps and the number of repetitions is in the order of dozens or hundreds, then the variability of the responses will not be very important, and the difficulty will be dominated by $\Length$ and $\ESteps$. However, this depends on the task; there are of course cases for which $\EBids$ can be very relevant.

}From here, we can finally define a measure of difficulty that accounts for all the issues that affect the search of the policy for a stochastic task:
\begin{equation}\label{eq:finaldifficulty}
\Hardness^{[\TAUC\epsilon, \delta]}(\mu) \triangleq min_\pi \log \EEffort^{[\TAUC\epsilon, \delta]}(\pi, \mu)
\end{equation}
\commentAGI{
\noindent It is important to compare this definition with those in section \ref{sec:KandKt}\commentAGI{ and Table~\ref{tab:tasks}}. 
Algorithm~\ref{alg:algo2} considers that the algorithm has memory between tasks, so we are really extending $Kt^{[\epsilon, \TAUC  
\mapsto  \nuE]}(\mu)$ in section \ref{sec:KandKt} ---but it can be modified easily without memory. \commentAGI{The good thing is that now we do not need to specify $\nuE$ any more, as the number of trials is given by Levin's search itself. This takes some of the (best) cases from the two columns of Table~\ref{tab:tasks} with $Kt$.} 
}

\commentAGI{

\subsection{Interpretation and use}


\comment{
\subsection{PAC verification refined (without target)}\label{sec:PACnotarget}

The previous search procedure and associated measures rely on Levin search knowing the threshold or target (i.e., knowing $r^* - \epsilon$). 
We would like to consider another kind of search where there is no {\em target} and the search is just looking for better and better policies without any particular target.

What we are going to do is considering that $\epsilon$ is linked to the confidence interval of a good estimation of a result for every policy, but not linked to any particular target.

\begin{definition}\label{def:levinstochastic3} Levin (universal) search for stochastic tasks or policies without target. 
Given a problem $x$ and a universal prefix-free Turing machine $U$, for which policies to $x$ can be enumerated. 
We conduct several phases, starting from phase $1$. 
For each phase $i$, 
we {\em reset} \red{(NOT CLEAR)} the best policy so far, denoted by $\pi^o_i$ and its estimated result as $\widehat{r}^o_i$. 
We execute all possible programs $p$ with $\Length(p) \leq  i$ for $2^{i-\Length(p)}$ steps. {\em In these steps we include the steps required to execute $p$ several times to consider that $p$ is an acceptable policy for $x$} (within the alloted number of steps). 
Now we check $Pr(r - \widehat{r} \leq \epsilon) \geq 1 - \delta$, i.e., that the estimation of the result of the program is sufficiently good.
A policy is considered verified when the $1 - \delta$ confidence interval is smaller than $2\epsilon$, which leads to $n \geq \frac{|z_{\delta/2}|^2\sigma^2}{(\epsilon)^2}$. 
If it is verified, we compare $\widehat{r}$ with $\widehat{r}^o_i$, and if it is better, we update $\pi^o$. 
As soon as the policy is deemed to be verified or the alloted number of steps is exhausted, we try the next program.  
When the phase is finished, we consider that $\pi^o_i$ is the best policy for this phase $i$. It is output and forgotten \red{(NOT CLEAR. WE CAN KEEP ITS RESULTS OR AVERAGE AND VARIANCE AND ACCUMULATE EVIDENCE.)}. 
\end{definition}

From the above procedure, we can now (externally) set a target $r^* - \epsilon$ and estimate the effort as the first phase $i$ where a policy $\pi^o_i$  can be output such that $Pr(r^* - r^o_i \leq \epsilon) \geq 1 - \delta$ (i.e., the probability of being output in $i$ is higher than $1 - \delta$).

As the number of hypotheses is about $2^{i}$ for each phase \red{(much lower than this if prefix code)}, it is likely that a spurious policy can get a small confidence interval with a high result (by chance, i.e., several good results in a row). The problem is not that a spurious policy can be output (because this will be different each time, but that the good one is not output). And this likelihood can increase with $i$? Perhaps not if we accumulate results of the best policy and we repeat the challenger until the interval is equally wide.
We can add extra checks (repetitions) whenever the comparison between $\widehat{r}$ and $\widehat{r}^o$ is positive. 
These extra checks COULD DEPEND ON THE PHASE, which would depend on $\Length$ and $\Steps$.

Assuming this issue is resolved, if there is a $\pi$ such that $r > r^* - \epsilon$, then this will be found at most with $n \sim \frac{|z_{\delta/2}|^2(1/2)^2}{(\epsilon)^2}$ repetitions. So $n$ is bounded by a value that is independent of the policy or the task (only depends on $\epsilon$ and $\delta$, and perhaps on $\Length$ and $\Steps$ if we add these extra checks depending on the phase). To see this it is enough to consider that as results are bounded between 0 and 1 the highest variability is $\sigma^2 = 1/4$, so we have that $n \sim \frac{|z_{\delta/2}|^2\sigma^2}{(\epsilon)^2}$ is bounded.
}



Does the approximation in equations \ref{eq:finaleffort} and \ref{eq:finaldifficulty} work properly? In order to get more insight about how it works we are going to see some figurative examples and see the values that would result, in order to see the effect of $\EBids$ in the new formula of difficulty.
This is shown in Table~\ref{tab:examples} (all cases consider $r^* = 1$). 
As we see from the results, there are cases where $\EBids$ can be large and have effect on $\log(\EEffort)$.

\begin{table}
{
\small
\begin{center}
\begin{tabular}{ccccccccccc}
 responses       & $1-\epsilon$ & $1-\delta$ & $\Length(\pi^*)$ & $\ESteps(\pi^*,\mu)$ & $\EResponse$ & $\sigma$ & $\EBids \:\mbox{with}\: \widehat{\sigma}$    & $\EBids \:\mbox{with}\: \sigma$ & $\log(\EEffort) \:\mbox{with}\: \widehat{\sigma} $ \\ \hline
 anything        & anything     & 1          &                  &                      & -            & -        & $\infty$     &   $\infty$        &           \\    
 1               & 1            & anything   &                  &                      & 1            & 0        & 49           & NaN               &           \\    
 1               & 0.99         & 0.95       &                  &                      & 1            & 0        & 197          & 0    &           \\    
 0*              & 0.01         & 0.95       &                  &                      & 0            & 0        & 197          & 0    &           \\    
 1               & 0.3          & 0.95       &                  &                      & 1            & 0        & 2            & 0    &           \\    
 0.3             & 0.3          & 0.95       &                  &                      & 0.3          & 0        & $\infty$     & 0    &           \\    
 $\{0.002,0.018\}$ & 0.01         & 0.95       &                  &                      & 0.019        & 0.008    & $\infty$ & $\infty$ &           \\    
 $\{0.002,0.018\}$ & 0.009        & 0.95       &                  &                      & 0.019        & 0.008    & 28       & 62 &           \\     
 1 one in 100    & 0.009        & 0.95       &                  &                      & 0.01         & 0.0995   & 7600     & 9507 &           \\     
 N(0.01,0.001)   & 0.009        & 0.95       &                  &                      & 0.01         & 0.001    & 26       & 9  &           \\     
 0.01            & 0.009        & 0.95       &                  &                      & 0.01         & 0        & 649      & 0    &           \\    
 $\{0,1\}$       & 0.45         & 0.95       &                  &                      & 0.5          & 0.5      & 98       & 97 &           \\    
 0.5             & 0.45         & 0.95       &                  &                      & 0.5          & 0        & 14       & 0    &           \\    
 0.55            & 0.5          & 0.95       &                  &                      & 0.55         & 0        & 16       & 0    &           \\    
 0.45*           & 0.5          & 0.95       &                  &                      & 0.45         & 0        & 16      & 0     &           \\    
 0.5             & 0.3          & 0.95       &                  &                      & 0.5          & 0        & 4        & 0    &                 \\    
 0.5             & 0.3          & 0.9        &                  &                      & 0.5          & 0        & 3        & 0    &                 \\    
 0.5             & 0.3          & 0.99       &                  &                      & 0.5          & 0        & 5        & 0    &                 \\
 1               & 0.5          & 0.95       & 10               & 200                  & 1            & 0        & 4        & 0    & 10 + log(800) = 19.6 \\    
 0.51            & 0.5          & 0.95       & 5                & 100                  & 0.51         & 0        & 93       & 0    & 5 + log(9300) = 18.2 \\    
 0.51            & 0.5          & 0.95       & 7                & 20                   & 0.51         & 0        & 93       & 0   & 7 + log(1860) = 17.0 \\    
 0.51            & 0.5          & 0.95       & 10               & 200                  & 0.51         & 0        & 93       & 0   & 10 + log(18600) = 24.2 \\    
 anything        & 0            & anything   & -                & -                    & -            & -        & 0        & 0   &    \\    
\hline
\end{tabular}
\caption{Examples of stochastic tasks. We are assuming $r^*=1$ (see second column). We figure out an optimal policy $\pi^*$ and see what value for $\EEffort$ would result (note that we are not doing an actual Levin search here). All estimations are using smoothing by the inclusion of a result 0 and 1 at the beginning of the results vector, as in Algorithm~\ref{alg:algo2}. We use $1,000$ trials to calculate the true expected response and the true variance. $\EResponse$ and $\sigma$ are shown without the smoothing. The expected number of bids ($\EBids$ with $\widehat{\sigma}$) is calculated incrementally until the number of repetitions needed to calculate a value of $n$ (the repetitions) is lower than the current iteration, as if the variance were approximated incrementally. The same calculation with the perfect value of $\sigma$ is represented in the next column: ($\EBids$ with $\sigma$). Note that $\EVerification$ is approximated here as a product of expected values, as it is actually the expected value of an algorithm using many runs.  The asterisks in the first column represent that these are cases played with policies that are rejected (just for comparison).}\label{tab:examples}
\end{center}
}
\end{table}

While the use of $\EBids$ includes an extra complication to the notion of difficulty, it does not add any significant additional cost in its computation, as for $\ESteps$ we need to execute the optimal policy many times. Nonetheless, the most difficult part of the estimation of difficulty is finding the optimal policy $\pi^*$.



The previous sections can be analysed in terms of whether they lead to bounded difficulty functions.
It seems that for the target case (section \ref{sec:PACtarget}) we have that if $r^* = 1$, the difficulty function is unbounded, but otherwise it can be bounded.

} 

\commentAGI{The number of repetitions is related to effort. 
We have argued that this is an upper approximation, but it can be much lower in many occasions. For instance, if an environment gives rewards 0 and 0.6 uniformly randomly independently of the action, so the expected response is 0.3, a threshold on 0.29999999 will lead to a high W but there is nothing to choose, as all policies behave the same, and whatever the agent does would be the same. All other policies get the same result, so there is no need for effort for discarding hypotheses, or dangers in guessing a wrong policy, etc. This is related to unquestionability, as whether there are competing programs of similar complexity is relevant, as in \cite{HernandezOrallo-MinayaCollado1998,HernandezOrallo2000a}. 
However, a Levin search (or a real agent) does not have this information, so all the verification effort has to be done anyway. 
}




\section{Conclusions}\label{sec:conclusions}


As we have mentioned during this paper, the notion of task is common in AI evaluation, in cognition and also in human evaluation. However, a general formalisation, their arrangement and, most especially, their difficulty has not been addressed with earnest determination. \commentAGI{Of course, with this resolution of being general, we have left some other more comfortable approaches, such as MDP and other formalisations in AI. Our main goal was difficulty, as we have seen that this is central to many of the other questions.} Difficulty is seen as computational steps of a Levin search, but this search has to be modified to cover stochastic behaviours. Nonetheless, we have been able to find an expression in terms of the best policy for the task.
These ideas are an evolution and continuation of early notions of task and difficulty in \cite{upsychometrics2} and \cite{orallo2014JAAMAS} respectively.

\commentAGI{
There have been some early approaches where the role of $Kt$ has been explored for different kinds of optimisation or inference problems 
\cite{hernandez2000philosophica,hernandez2000explanatory,HernandezOrallo99a,HernandezOrallo00c,HernandezOrallo00d}. The disposition and arrangement of tasks was discussed in \cite{HernandezOrallo00b}, as well as the notion of task or agent breadth \cite{masum2002turing,goertzel2009toward,rohrer2010accelerating}, and the distinction between specific and general \cite{hernandez2014ai}. 
The notions introduced in this paper, and the expression for difficulty can be useful to reunderstand some of the recent contributions in the evaluation of intelligence \cite{Legg-Hutter2007,Hibbard2009,HernandezOrallo-Dowe2010,HernandezOrallo10b,CAEPIA2011,AGI2011DarwinWallace,AGI2011Compression,AISB-AICAP2012b,manchester2012,AGI2012social,AGI2011Evaluating,AISB-AICAP2012a,IQnotformachines,leggveness2013approximation,DoweHernandez-Orallo2013a_universal,hernandez2013potential,insa2014}.
}

The relevance of verification in difficulty has usually been associated with deduction. However, some works have incorporated it as well in other inference problems, such as induction and optimisation, using Levin's Kt \cite{HernandezOrallo00d,mayfield2007minimal,alpcan2014}. 



\comment{
In AGNT, the set of policies in $\Omega$ is sought using a probability $w$, and we calculate the minimum number $N$ of policies to be sampled from $\Omega$ (without replacement) such that the probability of finding a policy close to the one with maximum reward with tolerance $\gamma$ is at least 0.5.
If $w$ is a normalisation of $2^{-K(\pi)}$, it is shown in proposition 3 in that paper that $N \leq 2^{-K(\pi)}$ where $\pi$ is one of the policies within the tolerance. This suggests the application of a logarithm, which leads to a difficulty function $D$. As $D$ depends on $\gamma$, this is used to create {\em environment response curves} that shows in the same plot $\gamma$ (or reward) and difficulty (or ability $\theta$).
While this result and approach is related to Levin's universal search, this approach (without replacement) is assuming that the policy-searching strategy has memory. Instead, Levin's universal search uses an enumeration approach, which does not require memory (in fact, re-evaluates some policies many times).
}

\comment{
``minimal process for creating $x$ from nothing"
\cite{mayfield2007minimal}
We have the length of the minimal history is $|p'h'| = min \{|ph|: U(p)  x \}$
We have $|ph|2^{|p|}$, where $h$ is the history of them... ``minimal process for creating $x$ from nothing".
$|ph|$ is the time to be executed, which includes $p$ as the length of the program again, because {\em the program has to be loaded}. THIS CAN BE DROPPED IF CONSIDERED PART OF THE EXECUTION STEPS. 
If we calculate the logarithm, we get $\log(|p|) + \log(|h|) + |p|$, i.e., Kt.
}


We can briefly mention some issues that we have not fully developed here. \commentAGI{First, we} \commentARXIV{We} limit difficulty to the complexity of the best policy. However, the notion would be more robust if we considered more policies and their aggregation using a (universal) distribution. \commentAGI{This is in principle possible, but would make the expression more convoluted and the notions of composition and decomposition trickier to analyse. A second issue is that in the second part of the paper we have not discussed the value of $\nuE$ as in $Kt^{[\epsilon, \TAUC  
\mapsto  \nuE]}(\mu)$ in section \ref{sec:KandKt}, because it is said to be given by the Levin's search. However, this could be further investigated.}  
Many other things could be explored, especially around the notions of composition and decomposition, task instance and agent response curves. 
\commentAGI{Also, while our use of `PAC' is just superficially related to PAC learning, we may have a closer look as this, in particular in the context of PAC reinforcement learning \cite{Strehl:2006:PMR:1143844.1143955}.}

\bibliography{biblio}

\end{document}